\title{A Provably Accurate Randomized Sampling Algorithm for Logistic Regression\footnote{\scriptsize This is an extended version of the following AAAI paper: \url{https://doi.org/10.1609/aaai.v38i10.29042}.}}
\date{}
\author{Agniva Chowdhury\footnote{\scriptsize Computer Science and Mathematics Division, Oak Ridge National Laboratory, TN, USA,\,\url{chowdhurya@ornl.gov}.}
\and
Pradeep Ramuhalli\footnote{\scriptsize Nuclear Energy and Fuel Cycle Division, Oak Ridge National Laboratory, TN, USA,\, \url{ramuhallip@ornl.gov}.
}
}
\newcommand{\pr}[1]{\mbox{}\mathbb{P}\left(#1\right)}
\newcommand{\ts}{\mathsf{T}}
\newcommand{\RR}[2]{\mathbb{R}^{#1 \times #2}} 
\newcommand{\R}[1]{\mathbb{R}^{#1}} 
\newcommand{\gb}{\mathbf{g}}
\newcommand{\pb}{\mathbf{p}}
\newcommand{\xb}{\mathbf{x}}
\newcommand{\yb}{\mathbf{y}}
\newcommand{\zb}{\mathbf{z}}
\newcommand{\Ib}{\mathbf{I}}
\newcommand{\Sb}{\mathbf{S}}
\newcommand{\Ub}{\mathbf{U}}
\newcommand{\Vb}{\mathbf{V}}
\newcommand{\Wb}{\mathbf{W}}
\newcommand{\Xb}{\mathbf{X}}
\newcommand{\Ocal}{\mathcal{O}}
\newcommand{\BlackBox}{\rule{1.5ex}{1.5ex}}  
\def\QED{~\rule[-1pt]{5pt}{5pt}\par\medskip}
\newenvironment{proof}{\par\noindent{\bf Proof\ }}{\hfill\BlackBox\\[2mm]}
\newtheorem{theorem}{Theorem}
\newtheorem{lemma}[theorem]{Lemma}
\newtheorem{corollary}[theorem]{Corollary}
\newtheorem{remark}[theorem]{Remark}
\newtheorem{corollary}[theorem]{Corollary}
\newtheorem{supplemma}{Lemma}
\newcommand{\EE}{\mathbb{E}} 
\newcommand{\PP}{\mathbb{P}} 
\newcommand*{\argmin}{\mathop{\mathrm{argmin}}}
\newcommand*{\argmax}{\mathop{\mathrm{argmax}}}
\newcommand{\rank}{\mathop{\mathrm{rank}}}
\newcommand{\smallfrac}[2]{{\textstyle \frac{#1}{#2}}}
\newcommand{\eg}{\emph{e.g.}}
\newcommand{\ie}{\emph{i.e.}}
\newcommand{\nbr}[1]{\left\|#1\right\|}
\newcommand{\abs}[1]{\left|#1\right|}
\newcommand{\one}{\mathbf{1}}  
\newcommand{\zero}{\mathbf{0}} 
\newcommand{\Sigmab}{\mathbf{\Sigma}}
\newcommand{\betab}{\boldsymbol{\beta}}
\newcommand{\ve}{\varepsilon}
\newcommand{\var}{\mathrm{Var}}
\begin{document}

\maketitle

\begin{abstract}
In statistics and machine learning, logistic regression is a widely-used supervised learning technique primarily employed for binary classification tasks. When the number of observations greatly exceeds the number of predictor variables, we present a simple, randomized sampling-based algorithm for logistic regression problem that guarantees high-quality approximations to both the estimated probabilities and the overall discrepancy of the model. Our analysis builds upon two simple structural conditions that boil down to randomized matrix multiplication, a fundamental and well-understood primitive of randomized numerical linear algebra. We analyze the properties of estimated probabilities of logistic regression when leverage scores are used to sample observations, and prove that accurate approximations can be achieved with a sample whose size is much smaller than the total number of observations. To further validate our theoretical findings, we conduct comprehensive empirical evaluations. 
Overall, our work sheds light on the potential of using randomized sampling approaches to efficiently approximate the estimated probabilities in logistic regression, offering a practical and computationally efficient solution for large-scale datasets.
\end{abstract}

\section{Introduction}\label{sxn:intro}
In statistics and machine learning, logistic regression~\cite{hosmer2013applied} is a widely-used supervised learning technique applied to binary classification tasks. It is a statistical method that predicts one of two possible outcomes based on the input features. More specifically, the goal is to model the probability of one of the binary outcomes based on the predictor variables. 
In machine learning and various scientific applications, logistic regression appears in numerous settings, including online learning \cite{zhang2012efficient}, feature selection \cite{koh2007method}, anomaly detection~\cite{hendrycks2018deep, feng2014robust}, disease classification \cite{liao2007logistic, chai2018novel}, image \& signal processing \cite{dong2019single, rosario2004highly}, probability calibration \cite{kull2019beyond} and many more. 

Formally, given the data matrix $\Xb\in\RR{n}{d}$ and the binary response vector $\yb \in \{0,1\}^n$, logistic regression models the following
\begin{equation}\pr{y_i=1 \mid \Xb_{i*}}:=p_i(\boldsymbol{\beta})=\frac{\exp \left(\Xb_{i*} \boldsymbol{\beta}\right)}{1+\exp \left(\Xb_{i*} \boldsymbol{\beta}\right)}\label{eq:prob},
\end{equation}
for $i=1,2, \ldots, n$. Here, $y_i\in\{0,1\}$ is the $i$-th component of $\yb$, $\Xb_{i*}$ is $i$-th row (as a row vector) of $\Xb$,  and $\boldsymbol{\beta}\in\R{d}$ is the vector of unknown regression coefficients, which is often estimated by the maximum likelihood estimator (MLE) \ie, through maximizing the log-likelihood function with respect to $\boldsymbol{\beta}$, which is given by 
%
{
\begin{flalign}
\ell(\betab)=&~\sum_{i=1}^n \big(y_i \log p_i(\boldsymbol{\beta})+ (1-y_i)\log (1-p_i(\boldsymbol{\beta}))\big)\nonumber\\
=&~ \sum_{i=1}^n \big(y_i\Xb_{i*}\betab-\log(1+\exp(\Xb_{i*}\betab))\big)\label{eq:logistic}\,,
\end{flalign}
}
%
where we get eqn.~\eqref{eq:logistic} by using eqn.~\eqref{eq:prob} in the previous step. The MLE of the coefficients vector $\betab$ can be written as $\betab^*=\argmax _{\boldsymbol{\beta}}\ell(\betab)\,.$
Equivalently, eqn.~\eqref{eq:logistic} can be written in the following compact form: 
\begin{flalign}
\ell(\betab)=\yb^\ts\Xb\betab-\one^\ts \gb(\betab)\label{eq:compact}
\end{flalign}
where $\gb(\betab)$ is an $n\times 1$ vector with the $i$-th entry $g_i(\betab)= \log(1+\exp(\Xb_{i*}\betab))$, for $i=1,\dots, n$. The MLE $\betab^*$ satisfies the following condition
{
\begin{flalign}
\frac{\partial \ell\left(\boldsymbol{\beta}\right)}{\partial \boldsymbol{\beta}}\bigg|_{\betab=\betab^*}=\zero~~\Rightarrow\Xb^{\ts}\left(\mathbf{y}-\mathbf{p}(\betab^*)\right)=\zero\label{eq:normaleq}\,.
\end{flalign}
}
Here $\pb(\betab^*)$ is an $n$-dimensional vector of \emph{estimated probabilities}
\footnote{The term ``predicted probabilities" is used in some literature, but throughout our paper, we consistently refer to it as the vector of ``estimated probabilities."}
, with the $i$-th entry corresponds to $p_i(\betab^*)$, for $i=1,\dots n$. Unfortunately, due to the non-linearity of $\ell(\betab)$,  there is no closed-form analytical solution to eqn.~\eqref{eq:normaleq}. As a result, a variant of Newton's method, namely, \emph{iteratively reweighted least squares} (IRLS) \cite{green1984iter} is commonly used to find $\betab^*$ from eqn.~\eqref{eq:normaleq}, that maximizes $\ell(\betab)$. The IRLS algorithm iteratively computes the MLE of the parameter vector, by solving a weighted least squares problem at every iteration. Therefore, the \emph{per iteration} cost of the algorithm is dominated by the cost of solving the aforementioned weighted least squares problem at each iteration.   

In our work, we will focus on the data matrix $\Xb\in\RR{n}{d}$ with $n\gg d$ \ie, the number of observations greatly exceeds the number of predictor variables. For simplicity of exposition, we will also assume $\Xb$ is of full rank \ie, $\rank(\Xb)=d$. Now, in such $n\gg d$ setting, solving the weighted least-squares problem at each iteration of the IRLS algorithm is expensive, taking $\Ocal(nd^2)$ time, which essentially is the cost of computing the inverse of the Hessian matrix ${\small\left[\frac{\partial^2 \ell\left(\boldsymbol{\beta}\right)}{\partial \boldsymbol{\beta}\partial\betab^\ts}\right]^{-1}=-(\Xb^\ts\Wb\Xb)^{-1}}$ at each iteration, where $\Wb\in\RR{n}{n}$ is a diagonal weight matrix with the diagonal entries $p_i(\betab)(1-p_i(\betab))$, for $i=1,\dots,n$. Moreover, in many practical scenarios, obtaining labels for all $n$ observations of the response variable can be challenging, often involving expensive and lengthy experiments. Therefore, if we can only afford to obtain the responses for a small subset of the data points, a couple of natural question arises: \textit{First, if we estimate the parameter vector $\betab$ using only this limited subset of data, is it possible to use that estimate to accurately approximate the probabilities of a given class for all the $n$ instances? Second, what is the minimum sample size required to yield meaningful results?}

\subsection{Our Contribution}\label{sxn:contrib}

We introduce a randomized sampling-based algorithm for logistic regression with a novel analysis of it, which ensures accurate solutions in terms of the estimated probabilities. Our analysis relies on simple structural conditions that can be reduced to randomized matrix multiplication, a fundamental and well-understood primitive of randomized numerical linear algebra. Our main algorithm (see Algorithm~\ref{algo:main}) is analyzed in light of the following two structural conditions, which constructs a sampling-based sketching matrix $\mathbf{S} \in \mathbb{R}^{s \times n}$ (for an appropriate choice of the sketching dimension $s \ll n$), such that for any given vector $\xb\in\R{n}$ and accuracy parameter $0<\ve<1$,
%
\begin{flalign}
&~\left|\|\Ub^{\ts} \Sb^\ts\Sb\xb\|_2-\|\Ub^{\ts}\xb\|_2\right| \leq \frac{\varepsilon}{2}\|\xb\|_2,\label{eq:cond1}\\
&~\|\Ub^{\ts} \Sb^\ts\Sb(\mathbf{y}-\mathbf{p}(\betab^*))\|_2\leq \frac{\ve}{2}\left\|\mathbf{y}-\mathbf{p}(\betab^*)\right\|_2 \label{eq:cond2}
\end{flalign}
Here, $\mathbf{U} \in \mathbb{R}^{n \times d}$ contains the left singular vectors of $\mathbf{X}$.
Indeed, one can use the (exact or approximate) row leverage scores \cite{Mahoney11,mahoney2009cur} of the matrix $\mathbf{X}$ ( $c f$. Section~\ref{sxn:notations}) to satisfy the aforementioned constraints by sampling $\mathcal{O}(\nicefrac{d}{\ve^2})$ observations from $\Xb$, in which case $\mathbf{S}$ is a \emph{sampling-and-rescaling} matrix. Under these structural conditions, the output of Algorithm~\ref{algo:main} satisfies
\begin{flalign}
\|\mathbf{p}(\hat{\betab})-\mathbf{p}(\betab^*)\|_2\le \ve\,\|\yb-\mathbf{p}(\betab^*)\|_2.\label{eq:bound}
\end{flalign}

In words, our algorithm achieves an approximation bound on the estimated probabilities compared to the estimated probabilities obtained from $\betab^*$, which is the MLE based on the full data. Specifically, eqn.~\eqref{eq:bound} can be satisfied by sampling-and-rescaling $\mathcal{O}\left(\frac{d}{\varepsilon^2}\right)$ rows of $\Xb$. The bound in eqn.~\eqref{eq:bound} depends on $\|\yb-\pb(\betab^*)\|_2$ \ie, the goodness-of-fit of the full data model. It measures the overall discrepancy between the actual class labels and the probabilities assigned by the logistic regression model. A smaller value of $\|\yb-\pb(\betab^*)\|_2$ indicates a better fit of the model to the true data labels. Therefore, our bound suggests that our subsampled MLE $\hat{\betab}$ provides better approximations of the estimated probabilities when the full data model is well-suited to the data. This bound in eqn.~\eqref{eq:bound} is highly desirable as it depends on the ability of the full data model to accurately distinguish between different classes. See Section~\ref{sxn:algo} for an important remark on the tightness of our bound. Additionally, Our main result straightforwardly translates into the following relative-error bound in terms of the the overall discrepancy measure:
\begin{flalign}
\left|\|\yb-\mathbf{p}(\hat{\betab})\|_2-\|\yb-\pb(\betab^*)\|_2\right|\le\ve\|\yb-\pb(\betab^*)\|_2.
\end{flalign}
Finally, we summarize our key contributions below:
\begin{itemize}
    \item The bound in terms of the estimated probabilities constitutes one of our primary contributions (Theorem~\ref{thm:main}). In addition, it is important not only because it provides a precise approximation of the estimated probabilities, but also because it translates into a relative-error bound in terms of the overall discrepancy (Corollary~\ref{cor:mis}).
    \item Our second contribution is the obtained sampling complexity due to Algorithm~\ref{algo:main}. The sampling complexities of the relevant methods, namely, \cite{munteanu2018coresets} and \cite{mai2021coresets} are $\tilde{\Ocal}(\nicefrac{d^3 \cdot \mu_{\yb}(\Xb)^2}{\epsilon^4})$ and $\tilde{\Ocal}(\nicefrac{d \cdot \mu_{\yb}(\Xb)^2}{\epsilon^2})$ respectively, which depend on the so-called complexity measure $\mu_{\yb}(\Xb)$, quantifying the difficulty of compressing a dataset for logistic regression. The value of $\mu_{\yb}(\Xb)$ can be substantially large depending on the data. In contrast, our novel analysis eliminates the $\mu_{\yb}(\Xb)^2$ factor and our sampling complexity is $\Ocal(\nicefrac{d}{\ve^2})$, independent of $\mu_{\yb}(\Xb)$.
    \item Finally, note that \cite{munteanu2018coresets} proposed the so-called L2S method, which is a sampling scheme from a mixture distribution with one component proportional to the \emph{square root of the leverage scores}, a method that significantly differs from conventional leverage score sampling approaches. Similarly, \cite{mai2021coresets} utilized a more carefully constructed probability distribution, namely, $\ell_1$\emph{-Lewis weights}. Interpreting the impact of a data point based on the above sampling schemes might necessitate additional context and explanation, making them relatively less intuitive for practitioners. In contrast, standard leverage scores are widely used as a sampling tool due to their ease of interpretability. They provide direct, visual, and statistical insights into the importance of individual data points in the model. In this context, our third contribution is that we are the first to employ standard leverage scores sampling in logistic regression and provide strong accuracy guarantees with improved sampling complexity.  In addition, the use of leverage score sampling makes our analyses simpler and cleaner as compared to the prior works.
    %
    \end{itemize}
Additionally, we evaluate our algorithm on a variety of real datasets in order to practically assess its performance. In terms of accuracy, Algorithm~\ref{algo:main} performs comparably to both \cite{munteanu2018coresets, mai2021coresets} and the full-data model. Regarding runtime, we can easily align our method with \cite{munteanu2018coresets} by utilizing the fast computation of leverage scores from \cite{clarkson2017low}. See Section~\ref{sxn:exp} for details.

\subsection{Prior Work}
Over the past few decades, randomized numerical linear algebra has strongly advocated for the adoption of sketching and sampling techniques to compress data matrices, providing provable guarantees in various optimization problems, including linear regression \cite{DMM06}, ridge regression \cite{alaoui2015fast, CYD18} , low-rank approximation \cite{sarlos2006improved}, k-means clustering \cite{CohEldMusMusetal15}, principal components analysis \cite{BouMahDri08}, Fisher's discriminant analysis \cite{CYD19}, linear programming \cite{song2021oblivious,chowdhury2020faster,chowdhury2022faster,dexter2022convergence} and many others. Very recently, there has been a growing interest in applying such sketching techniques to logistic regression problems. In this section, we highlight our contributions in the context of this rapidly growing field of sketching-based algorithms for logistic regression.

Recent works have explored subsampling for logistic regression from statistical viewpoints, employing schemes such as a two-step subsampling approach \cite{wang2018optimal}, Poisson subsampling \cite{wang2019more}, and an information-based sampling strategy \cite{cheng2020information}. Similarly, in addressing extreme class imbalance, \cite{wang2020logistic, wang2021nonuniform} examined the randomized undersampling strategy, enhancing model performance by selecting a smaller subset from the majority class. Notably, these efforts focus on optimal strategies in asymptotic scenarios (\textit{i.e.,} $n \rightarrow \infty$) under standard assumptions. In contrast, our work assesses performance in finite data regimes. Importantly, these strategies lack finite sample guarantees and, in most cases, necessitate solving the full-data logistic regression problem for implementation.

The work more closely related to ours in terms of the accuracy bound is \cite{song2022deterministic}. If $n$ is extremely large,  \cite{song2022deterministic} proposed a hybrid sampling scheme based on both randomized and deterministic strategies, and provided non-asymptotic accuracy bounds in terms of the estimated probabilities of the logistic regression. While the deterministic scheme is based on leverage score sampling and primarily follows the two-step algorithm of \cite{wang2018optimal}, the randomized strategy relies on the sampling probabilities that implicitly depend on $\betab^*$, the full data MLE of the  model, which is not very effective in practice. Furthermore, the theoretical bounds provided by \cite{song2022deterministic} contain the condition number of the data matrix $\Xb$ in the numerator. Therefore, when $\Xb$ is ill-conditioned, the condition number can become exceedingly large, resulting in bounds that are relatively imprecise. Finally, the minimum sample size of their sampling methods is determined by $\betab^*$, which again poses practical challenges.

In another closely related line of research, as already mentioned in Section~\ref{sxn:contrib}, \cite{munteanu2018coresets, mai2021coresets} studied the so-called \emph{coresets} for logistic regression and came up with provable bounds using a smaller (and weighted) subset of the original data points of $\Xb$ sampled according to carefully constructed probability distributions, such as the so-called \emph{$\ell_1$-Lewis weights} \cite{cohen2015lp}. In particular, \cite{mai2021coresets} established the current state-of-the-art $\ve$-relative error bounds with $\tilde{\Ocal}\left(d \cdot \mu_{\yb}(\Xb)^2 / \ve^2\right)$ points, where $\mu_{\yb}(\Xb)$ is a complexity measure of the data matrix $\Xb$ and response vector $\yb \in\{-1,1\}^n$. Similarly, in more recent works, \cite{munteanu2021oblivious, munteanu2023almost} introduced data-oblivious, random projection-based sketching methods designed for logistic regression, that came with probabilistic guarantees on the sketched estimate. In $n\ll d$ regime, \cite{dexter2023feature} recently presented new bounds for coresets construction and dimensionality reduction for logistic regression problem by sketching the feature space. However. it is important to note that in all the aforementioned efforts related to the coresets of logistic regression, the theoretical bounds are defined in terms of the logistic loss function and not directly in relation to the estimated probabilities based on the MLEs of the logistic regression model. In addition, the underlying sampling complexities rely on the previously mentioned complexity measure $\mu_{\yb}(\Xb)$, a quantity that is contingent on the distribution of the data.


Finally, we refer the reader to the surveys \cite{Woodruff14,Mahoney11,DriMah16,DM2018,martinsson_tropp_2020} for more background on Randomized Numerical Linear Algebra and and its applications.

\subsection{Notations}\label{sxn:notations}
We use $\mathbf{x}, \mathbf{y}, \ldots$ to denote vectors and $\mathbf{X}, \mathbf{Y}, \ldots$ to denote matrices. For a matrix $\mathbf{X}, \mathbf{X}_{* i}\left(\mathbf{X}_{i *}\right)$ denotes the $i$-th column (row) of $\mathbf{X}$ as a column (row) vector and $\Xb_{ij}$ is the $(i,j)$-th entry of $\Xb$. For a vector $\xb$, $x_i$ is its $i$-th entry and $\|\xb\|_2$ denotes its Euclidean norm; for a matrix $\Xb$, $\|\Xb\|_2$
denotes its spectral norm and $\|\Xb\|_\text{F}$ denotes its Frobenius
norm. We refer the reader to \cite{GVL12} for properties of norms
that will be quite useful in our work.
For a matrix $\Xb \in \mathbb{R}^{n \times d}$ with $n> d$ and $\rank(\Xb)=d$, its (\emph{thin}) Singular Value Decomposition (SVD) is the product $\Ub \Sigmab \Vb^\ts$, with $\Ub \in \mathbb{R}^{n \times d}$ (the matrix of the left singular vectors), $\Vb \in \mathbb{R}^{d \times d}$ (the matrix of the right singular vectors), and $\Sigmab \in \mathbb{R}^{d \times d}$ a diagonal matrix whose diagonal entries are the non-zero singular values of $\Xb$ arranged in non-increasing order. Computation of the SVD takes, in this setting, $\Ocal(nd^2)$ time.
Finally, the row \emph{leverage scores} of~$\Xb$ are given by
$\|\Ub_{i*}\|_2^2$
for $i=1,2,\dots,n$.
Additional notation will be introduced as needed.
\section{Our Approach}\label{sxn:algo}

\begin{algorithm}[tb]
\caption{Construct $\Sb$}
\label{algo:constructS}
\textbf{Input}: Sampling probabilities $\pi_i$, $i=1,\dots, n$, number of sampled indices $s \ll n$;\\[1mm]
\textbf{Output}: Sampling-and-rescaling matrix $\Sb\in\RR{s}{n}$;\\[1mm]
\textbf{Initialize}: $\Sb \gets \zero_{s \times n}$;\\[-3mm]
\begin{algorithmic}[1] 
\For{$i=1$ \textbf{to} $s$}
\State Pick $j_i\in\left\{1,\ldots ,n\right\}$ with $\pr{j_i=k}=\pi_k$;
\State $\Sb_{ij_i} \gets (s\,\pi_{j_i})^{-\frac{1}{2}}$;
\EndFor
\State \textbf{return} $\Sb$
\end{algorithmic}
\end{algorithm}

\subsection{Constructing the Sketching Matrix \texorpdfstring{$\Sb$}{Sb}}\label{sxn:constructS}
%
%
We construct the  sampling-based sketching matrix $\Sb$ using Algorithm~\ref{algo:constructS}, which has previously appeared in several prior literature including randomized matrix multiplication \cite{Dr06}, linear regression \cite{DMM06}, and many other. In this work, we utilize it in the context of logistic regression, which is why we provide a very brief explanation suitable for general reader.

In Algorithm~\ref{algo:constructS}, we construct the sampling-and-rescaling matrix $\mathbf{S}$ by independently selecting $s$ elements ($s\ll n$), with replacement, from a set of indices $\{1,2,\dots,n\}$, based on a pre-specified probability distribution $\{\pi_1,\pi_2,\dots \pi_n\}$, where $0<\pi_i<1$ for $i=1,2,\dots,n$, and $\sum_{i=1}^n\pi_i=1$. If the $k$-th independent random trial results in the index $\ell$, we assign {\small $\mathbf{S}_{k\ell}=\nicefrac{1}{\sqrt{s\,\pi_{\ell}}}$}; otherwise, {\small $\mathbf{S}_{k\ell}=0$}, for $k=1,\dots, s$ and $\ell=1,\dots, n$.

Now, we outline three key observations about $\Sb$ constructed this way using Algorithm~\ref{algo:constructS}. 
First, note that $\mathbf{S}$ is very sparse, having only one non-zero entry per row, resulting in a total of $s$ non-zero entries. 
Second, computing $\mathbf{S}\mathbf{X}$ is equivalent to selecting $s$ rescaled rows of $\mathbf{X}$, independently and with replacement, according to the same probability distribution $\{\pi_1,\pi_2,\dots \pi_n\}$.
Third, $\Sb^\ts\Sb\in\RR{n}{n}$ is a diagonal matrix and the $\ell$-th diagonal entry of $\Sb^\ts\Sb$ is given by ${\small (\Sb^\ts\Sb)_{\ell\ell}=\frac{L}{s\,\pi_{\ell}}}$, where $L= 0,\dots, s$ denotes the number of times index $\ell$ is picked up in the sample of size $s$.
Now, we proceed to our main sampling-based algorithm for logistic regression.
\subsection{Main Algorithm}\label{sxn:main}
\begin{algorithm}[tb]
\caption{Sketched logistic regression}
\label{algo:main}
\textbf{Input}: data matrix $\Xb\in\RR{n}{d}$, response vector $\yb\in\{0,1\}^n$, sampling matrix $\Sb\in\RR{s}{n}$;\\[-2mm]

\textbf{Output}: $\hat{\betab}\in\R{d}$, $\pb(\hat{\betab})\in (0,1)^n$;\\[-2mm]
\begin{algorithmic}[1]
\State Compute $\hat{\betab} = \underset{\betab}{\argmax} \left(\yb^\ts\Sb^\ts\Sb\Xb\betab- \one^\ts\Sb^\ts\Sb\,\gb(\betab)\right);$

where $\gb(\betab)$ is defined in eqn.~\eqref{eq:compact}.

\vspace{1mm}
\State Compute $\pb(\hat{\betab})$, with $p_i(\hat{\betab})= \frac{\exp \left(\Xb_{i*} \hat{\boldsymbol{\beta}}\right)}{1+\exp \left(\Xb_{i*} \hat{\boldsymbol{\beta}}\right)}$
%

\State \textbf{return} $\hat{\betab}$, $\pb(\hat{\boldsymbol{\beta}})$;
\end{algorithmic}
\end{algorithm}
Given the sketching matrix $\Sb$ constructed using Algorithm~\ref{algo:constructS}, our main algorithm (Algorithm~\ref{algo:main}) is conceptually simple. We first modify the full data log-likelihood function in eqn.~\eqref{eq:compact} by sampling and rescaling $s$ data points, and the resulting subsampled log-likelihood can be written as
\begin{flalign}
\bar{\ell}(\betab)=\yb^\ts\Sb^\ts\Sb\Xb\betab- \one^\ts\Sb^\ts\Sb\,\gb(\betab)\,.\label{eq:subLL}
\end{flalign}
Algorithm~\ref{algo:main} then maximizes $\bar{\ell}(\betab)$ and computes the corresponding vector of estimated probabilities with respect to the maximizer $\hat{\betab}$. Since, $\Sb^\ts\Sb$ is a diagonal matrix (see Section~\ref{sxn:constructS}), we can rewrite $\bar{\ell}(\betab)$ as
$\bar{\ell}(\betab)= \sum_{i=1}^n \left(y_i(\Sb^\ts\Sb)_{ii}\Xb_{i*}\betab-(\Sb^\ts\Sb)_{ii}\,g_i(\betab)\right)$.
Recall that $g_i(\betab)= \log(1+\exp(1+\Xb_{i*}\betab))$ is the $i$-th entry of $\gb(\betab)$. Therefore, from the optimality condition $\frac{\partial \bar{\ell}\left(\boldsymbol{\beta}\right)}{\partial \boldsymbol{\beta}}|_{\betab=\hat{\betab}}=\zero$, we have
\begin{flalign}
\Xb^{\ts}\Sb^\ts\Sb\big(\mathbf{y}-\mathbf{p}(\hat{\betab})\big)=\zero\,.\label{eq:submle}
\end{flalign}

Theorem~\ref{thm:main} presents our approximation guarantee under
the assumption that the sketching matrix $\Sb$ satisfies the
constraints of eqns. (\ref{eq:cond1}) and (\ref{eq:cond2}).

\begin{theorem}\label{thm:main}
Let $\Xb\in\RR{n}{d}$ and $\yb\in\{0,1\}^n$ be the inputs of the logistic regression problem. Assume that for some constant $0<\ve< 1$, the sketching matrix $\Sb \in \mathbb{R}^{s\times n}$ satisfies the structural conditions of eqns.~(\ref{eq:cond1}) and \eqref{eq:cond2}. Then, the estimator $\hat{\betab}$ returned by Algorithm~\ref{algo:main} satisfies
\begin{flalign*}
\|\mathbf{p}(\hat{\betab})-\mathbf{p}(\betab^*)\|_2\le \ve\,\|\yb-\mathbf{p}(\betab^*)\|_2.
\end{flalign*}
%
Recall that $\mathbf{p}(\betab^*)$ is the vector of estimated probabilities from the full data MLE of the logistic regression coefficients.
\end{theorem}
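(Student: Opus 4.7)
The plan is to combine the two first-order optimality conditions with the structural hypotheses \eqref{eq:cond1} and \eqref{eq:cond2}, and reduce everything to a single scalar inequality in $\|\pb(\hat{\betab})-\pb(\betab^*)\|_2$. First, using the thin SVD $\Xb=\Ub\Sigmab\Vb^\ts$ together with the fact that $\Sigmab\Vb^\ts$ has full row rank, the full-data normal equation $\Xb^\ts(\yb-\pb(\betab^*))=\zero$ from \eqref{eq:normaleq} reduces to $\Ub^\ts(\yb-\pb(\betab^*))=\zero$, and similarly the sketched normal equation \eqref{eq:submle} reduces to $\Ub^\ts\Sb^\ts\Sb(\yb-\pb(\hat{\betab}))=\zero$. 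Decomposing $\yb-\pb(\hat{\betab})=(\yb-\pb(\betab^*))-(\pb(\hat{\betab})-\pb(\betab^*))$ and substituting into the sketched equation yields the identity
\[
\Ub^\ts\Sb^\ts\Sb\bigl(\pb(\hat{\betab})-\pb(\betab^*)\bigr)\;=\;\Ub^\ts\Sb^\ts\Sb\bigl(\yb-\pb(\betab^*)\bigr).
\]

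Second, I would bound the right-hand side directly by \eqref{eq:cond2}, giving $\tfrac{\ve}{2}\|\yb-\pb(\betab^*)\|_2$, and apply \eqref{eq:cond1} with $\xb=\pb(\hat{\betab})-\pb(\betab^*)$ to the left-hand side. Chaining through a reverse triangle inequality then produces
\[
\|\Ub^\ts(\pb(\hat{\betab})-\pb(\betab^*))\|_2 \;\le\; \tfrac{\ve}{2}\,\|\yb-\pb(\betab^*)\|_2 \;+\; \tfrac{\ve}{2}\,\|\pb(\hat{\betab})-\pb(\betab^*)\|_2.
\]

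The main obstacle is the next step: upgrading the projection bound on $\|\Ub^\ts(\pb(\hat{\betab})-\pb(\betab^*))\|_2$ into a bound on the full Euclidean norm $\|\pb(\hat{\betab})-\pb(\betab^*)\|_2$. In ordinary least squares this is automatic because $\pb(\betab)=\Xb\betab$ always lies in $\mathrm{col}(\Ub)$, but here the nonlinearity of the logistic link $\sigma$ means that $\pb(\hat{\betab})-\pb(\betab^*)$ generally has a nonzero component orthogonal to $\mathrm{col}(\Ub)$. My approach would be to invoke a componentwise mean-value representation $\pb(\hat{\betab})-\pb(\betab^*)=\bar{\Wb}\,\Xb(\hat{\betab}-\betab^*)$, where $\bar{\Wb}$ is a diagonal matrix with entries in $(0,\tfrac{1}{4}]$; this places $\pb(\hat{\betab})-\pb(\betab^*)$ in $\mathrm{col}(\bar{\Wb}\Ub)$, and one then has to argue carefully that in this setting the $\Ub$-projection captures the ambient norm (up to constants that can be absorbed into the $\ve$-dependence).

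Once the previous step is in place, the combined inequality becomes
\[
(1-\tfrac{\ve}{2})\,\|\pb(\hat{\betab})-\pb(\betab^*)\|_2 \;\le\; \tfrac{\ve}{2}\,\|\yb-\pb(\betab^*)\|_2,
\]
and rearranging, using the hypothesis $0<\ve<1$, gives $\|\pb(\hat{\betab})-\pb(\betab^*)\|_2 \le \tfrac{\ve}{2-\ve}\,\|\yb-\pb(\betab^*)\|_2 \le \ve\,\|\yb-\pb(\betab^*)\|_2$, which is the advertised bound. The relative-error corollary on $\bigl|\|\yb-\pb(\hat{\betab})\|_2-\|\yb-\pb(\betab^*)\|_2\bigr|$ would then follow from a single application of the triangle inequality.
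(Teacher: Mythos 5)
Your opening steps coincide with the paper's: the identity $\Ub^\ts\Sb^\ts\Sb\big(\pb(\hat{\betab})-\pb(\betab^*)\big)=\Ub^\ts\Sb^\ts\Sb\big(\yb-\pb(\betab^*)\big)$ is exactly Lemma~\ref{lem:id} (same decomposition, same use of eqn.~\eqref{eq:submle} and the thin SVD), and bounding its right-hand side via eqn.~\eqref{eq:cond2} is also how the paper proceeds. The genuine gap is in the step you yourself flag: after applying eqn.~\eqref{eq:cond1} with $\xb=\pb(\hat{\betab})-\pb(\betab^*)$ you only control the projected quantity $\|\Ub^\ts(\pb(\hat{\betab})-\pb(\betab^*))\|_2$, and you never establish that this controls $\|\pb(\hat{\betab})-\pb(\betab^*)\|_2$. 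The patch you sketch does not close it: the mean-value representation $\pb(\hat{\betab})-\pb(\betab^*)=\bar{\Wb}\Xb(\hat{\betab}-\betab^*)$ places the difference in $\mathrm{col}(\bar{\Wb}\Ub)$, not in $\mathrm{col}(\Ub)$, and the best generic conversion is of the form $\|\pb(\hat{\betab})-\pb(\betab^*)\|_2\le \big(\|\bar{\Wb}\|_2/\lambda_{\min}(\Ub^\ts\bar{\Wb}\Ub)\big)\,\|\Ub^\ts(\pb(\hat{\betab})-\pb(\betab^*))\|_2$. The diagonal entries of $\bar{\Wb}$ are $\bar{p}_i(1-\bar{p}_i)$ at intermediate points and can be arbitrarily close to zero, so $\lambda_{\min}(\Ub^\ts\bar{\Wb}\Ub)$ is a data-dependent, condition-number-like quantity; it cannot be ``absorbed into the $\ve$-dependence'' without changing the statement of the theorem and the sampling complexity (indeed, avoiding exactly such factors is advertised as a selling point of the result). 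As written, your argument proves only a weaker, condition-number-dependent bound, not the claimed one; the final arithmetic ($\ve/(2-\ve)\le\ve$) is fine but rests on the unproven step.

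The paper diverges from you precisely at this point and never linearizes: Lemma~\ref{lem:lb} lower-bounds the \emph{sketched} projection directly, $\|\Ub^\ts\Sb^\ts\Sb(\pb(\hat{\betab})-\pb(\betab^*))\|_2\ge(1-\nicefrac{\ve}{2})\,\|\pb(\hat{\betab})-\pb(\betab^*)\|_2$, by applying eqn.~\eqref{eq:cond1} along the direction minimizing $\|\Ub^\ts\Sb^\ts\Sb\zb\|_2/\|\zb\|_2$ and invoking $\sigma_{\min}(\Ub^\ts)=1$; combined with Lemma~\ref{lem:id} and eqn.~\eqref{eq:cond2} this yields the theorem with no detour through $\|\Ub^\ts(\pb(\hat{\betab})-\pb(\betab^*))\|_2$ and no Hessian/mean-value argument. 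Your worry about vectors outside $\mathrm{col}(\Ub)$ is in fact probing a real subtlety of that lemma (since $\Ub^\ts\Sb^\ts\Sb$ is a wide matrix with nontrivial null space, such a lower bound requires reading eqn.~\eqref{eq:cond1} as a sufficiently uniform statement), but whatever one's view of that step, your proposal as submitted leaves the decisive inequality unproven and the suggested repair would not deliver the bound as stated.
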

%
%

Further insights are required to better understand the above bound.

\begin{remark}
As mentioned in Section~\ref{sxn:contrib}, the tightness of our bound depends on the performance of the full data model based on $\betab^*$. A smaller value of the residual $\|\yb-\pb(\betab^*)\|_2$ indicates a better fit of the full data model to the true labels and our bound becomes tighter. In fact, when there is no misclassification in the model with respect to the full data MLE $\betab^*$, our bound is even tighter than the relative-error bound. To illustrate this, if the logistic regression model with coefficient vector $\betab^*$ finds a decision boundary that perfectly separates the two classes, then for all observations with $y_i=1$, we have $(y_i-p_i(\betab^*))\le p_i(\betab^*)$, and for all $y_i=0$, we trivially have $(y_i-p_i(\betab^*))= -p_i(\betab^*)$. This implies $\|\yb-\pb(\betab^*)\|_2^2\le \|\pb(\betab^*)\|_2^2$. Therefore, our bound becomes tighter than $\|\mathbf{p}(\hat{\betab})-\mathbf{p}(\betab^*)\|_2\le \ve\|\mathbf{p}(\betab^*)\|_2$.
\end{remark}


As mentioned in Section~\ref{sxn:contrib}, Theorem~\ref{thm:main} further translates into the following relative-error bound in terms of the discrepancy measure. 
\begin{corollary}\label{cor:mis}
Let $\Xb$,$\yb$, $\Sb$ and $\ve$ are as defined in Theorem~\ref{thm:main}. Then, the estimator $\hat{\betab}$ returned by Algorithm~\ref{algo:main} satisfies
\begin{flalign*}
\left|\|\yb-\mathbf{p}(\hat{\betab})\|_2-\|\yb-\pb(\betab^*)\|_2\right|\le\ve\|\yb-\pb(\betab^*)\|_2.
\end{flalign*}
\end{corollary}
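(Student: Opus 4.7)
The plan is to derive Corollary~\ref{cor:mis} as a direct consequence of Theorem~\ref{thm:main} via the reverse triangle inequality, so almost no new work is required beyond recognizing the right decomposition.

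First, I would write the residual of the subsampled model in terms of the full-data residual plus a correction:
\begin{equation*}
\yb - \pb(\hat{\betab}) \;=\; \bigl(\yb - \pb(\betab^*)\bigr) \;+\; \bigl(\pb(\betab^*) - \pb(\hat{\betab})\bigr).
\end{equation*}
Applying the reverse triangle inequality in $\ell_2$ to this identity yields
\begin{equation*}
\Bigl| \|\yb - \pb(\hat{\betab})\|_2 - \|\yb - \pb(\betab^*)\|_2 \Bigr| \;\le\; \|\pb(\hat{\betab}) - \pb(\betab^*)\|_2.
\end{equation*}

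Second, I would invoke Theorem~\ref{thm:main} directly to upper bound the right-hand side: under the structural conditions \eqref{eq:cond1} and \eqref{eq:cond2}, the estimator $\hat{\betab}$ returned by Algorithm~\ref{algo:main} satisfies $\|\pb(\hat{\betab}) - \pb(\betab^*)\|_2 \le \ve\,\|\yb - \pb(\betab^*)\|_2$. Substituting this into the reverse-triangle bound gives exactly
\begin{equation*}
\Bigl| \|\yb - \pb(\hat{\betab})\|_2 - \|\yb - \pb(\betab^*)\|_2 \Bigr| \;\le\; \ve\,\|\yb - \pb(\betab^*)\|_2,
\end{equation*}
which is the claim.

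There is no genuine obstacle here: the only nontrivial ingredient has already been established as Theorem~\ref{thm:main}, and the remaining step is a standard norm inequality. The one thing worth flagging is interpretive rather than technical: the quantity on the left bounds the two-sided deviation of the sketched discrepancy from the full-data discrepancy, so the corollary simultaneously provides a relative-error upper bound $\|\yb - \pb(\hat{\betab})\|_2 \le (1+\ve)\|\yb - \pb(\betab^*)\|_2$ and a matching lower bound $\|\yb - \pb(\hat{\betab})\|_2 \ge (1-\ve)\|\yb - \pb(\betab^*)\|_2$, both free of additional constants.
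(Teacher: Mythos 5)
Your proposal is correct and matches the paper's proof exactly: the paper likewise applies the reverse triangle inequality $\|\pb(\hat{\betab})-\pb(\betab^*)\|_2 \ge \bigl|\|\yb-\pb(\hat{\betab})\|_2-\|\yb-\pb(\betab^*)\|_2\bigr|$ and then invokes Theorem~\ref{thm:main}. Nothing is missing; your closing remark about the $(1\pm\ve)$ two-sided interpretation is a correct bonus observation not stated in the paper's proof.
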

\begin{proof}
We use the reverse triangle inequality: $$\|\mathbf{p}(\hat{\betab})-\mathbf{p}(\betab^*)\|_2 \ge \left| \|\yb-\mathbf{p}(\hat{\betab})\|_2 - \|\yb-\pb(\betab^*)\|_2 \right|.$$ Applying this to Theorem~\ref{thm:main} yields the desired result.
\end{proof}
Clearly, our bound serves two purposes simultaneously: it ensures that both our estimated probabilities (Theorem~\ref{thm:main}) and the degree of misclassification (Corollary~\ref{cor:mis}) are comparable to those of the full data model.

\section{Proof of Theorem~\ref{thm:main}}\label{sxn:proofs}
In this section, we will prove Theorem~\ref{thm:main}.
In the proofs, we will use the abbreviations $\pb^*$ for $\mathbf{p}(\betab^*)$ and $\hat{\pb}$ for $\mathbf{p}(\hat{\betab})$ to simplify the notation and make it more concise.
We remind the reader that $\mathbf{U} \in$ $\mathbb{R}^{n \times d}, \mathbf{V} \in \mathbb{R}^{d \times d}$ and $\boldsymbol{\Sigma} \in \mathbb{R}^{d \times d}$ are, respectively, the matrices of the left singular vectors, right singular vectors and singular values of $\mathbf{X}$ in a thin SVD representation. 
%
Our first result provides an important identity that will be crucial in proving the final bound. 
\begin{lemma}\label{lem:id}
Prove that $$\Ub^{\ts} \Sb^\ts\Sb\big(\mathbf{y}-\mathbf{p}(\betab^*)\big)=\Ub^{\ts} \Sb^\ts\Sb\big(\mathbf{p}(\hat{\betab})-\mathbf{p}(\betab^*)\big)$$ 
\end{lemma}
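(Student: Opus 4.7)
The plan is to derive the identity directly from the optimality condition \eqref{eq:submle} for the subsampled MLE $\hat{\betab}$, which states that $\Xb^\ts\Sb^\ts\Sb(\yb-\hat{\pb}) = \zero$. The key observation is that this vector equation in $\R{d}$ can be transformed into an analogous equation involving $\Ub^\ts\Sb^\ts\Sb(\yb-\hat{\pb})$ by invoking the thin SVD $\Xb = \Ub\Sigmab\Vb^\ts$.

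First I would substitute the SVD into eqn.~\eqref{eq:submle} to write $\Vb\Sigmab\Ub^\ts\Sb^\ts\Sb(\yb-\hat{\pb}) = \zero$. Since $\Xb$ is assumed to have full column rank $d$, the matrix $\Sigmab \in \RR{d}{d}$ is invertible and $\Vb \in \RR{d}{d}$ is orthogonal, so $\Vb\Sigmab$ is invertible. Left-multiplying by $(\Vb\Sigmab)^{-1} = \Sigmab^{-1}\Vb^\ts$ yields
\begin{equation*}
\Ub^\ts\Sb^\ts\Sb(\yb-\hat{\pb}) = \zero,
\end{equation*}
which is equivalent to $\Ub^\ts\Sb^\ts\Sb\,\yb = \Ub^\ts\Sb^\ts\Sb\,\hat{\pb}$.

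Finally, I would subtract $\Ub^\ts\Sb^\ts\Sb\,\pb^*$ from both sides to obtain the claimed identity
\begin{equation*}
\Ub^\ts\Sb^\ts\Sb(\yb-\pb^*) = \Ub^\ts\Sb^\ts\Sb(\hat{\pb}-\pb^*).
\end{equation*}
There is no real obstacle here: the whole argument is a one-line manipulation once one recognizes that the sketched first-order optimality condition can be rewritten in terms of $\Ub$ by peeling off the invertible factors $\Sigmab$ and $\Vb$. The only thing to be careful about is ensuring that the full-rank assumption on $\Xb$ (stated right before eqn.~\eqref{eq:normaleq}) is invoked to justify invertibility of $\Vb\Sigmab$; this is implicit in the paper's setup, so the proof should be very short.
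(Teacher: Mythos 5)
Your proof is correct and follows essentially the same route as the paper: both rest on the sketched optimality condition $\Xb^\ts\Sb^\ts\Sb(\yb-\hat{\pb})=\zero$ and on peeling off the invertible factors $\Vb\Sigmab$ of the thin SVD via $\Sigmab^{-1}\Vb^\ts$. The only (immaterial) difference is that you pass to $\Ub^\ts\Sb^\ts\Sb(\yb-\hat{\pb})=\zero$ first and then add and subtract $\pb^*$, whereas the paper does the add-and-subtract in the $\Xb^\ts$ equation before applying the SVD.
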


\begin{proof}
%
We start with the following
 \begin{flalign}
\Xb^{\ts} \Sb^\ts\Sb\big(\mathbf{y}-\mathbf{p}^*\big)
=&~\Xb^{\ts} \Sb^\ts\Sb\big(\mathbf{y}-\mathbf{p}^*-\hat{\mathbf{p}}+\hat{\mathbf{p}}\big)\nonumber\\
=&~\Xb^{\ts}\Sb^\ts\Sb\big(\mathbf{y}-\hat{\mathbf{p}}\big)+\Xb^{\ts} \Sb^\ts\Sb\big(\hat{\mathbf{p}}-\mathbf{p}^*\big)\nonumber\\
=&~\Xb^{\ts} \Sb^\ts\Sb\big(\hat{\mathbf{p}}-\mathbf{p}^*\big)\label{eq:normaleq_int}\,,
\end{flalign}
where the last equality directly follows from the fact that $\Xb^{\ts}\Sb^\ts\Sb\big(\mathbf{y}-\hat{\mathbf{p}}\big)=\zero$ (from eqn.~\eqref{eq:submle}). Using the thin SVD of $\Xb$, we rewrite eqn.~\eqref{eq:normaleq_int} as
\begin{flalign}
\Vb\Sigmab\Ub^\ts \Sb^\ts\Sb\big(\mathbf{y}-\mathbf{p}^*\big)=\Vb\Sigmab\Ub^\ts \Sb^\ts\Sb\big(\hat{\mathbf{p}}-\mathbf{p}^*\big)\label{eq:id2}
\end{flalign}
The proof follows from pre-multiplying eqn.~\eqref{eq:id2} by $\Sigmab^{-1}\Vb^\ts$, and the fact that $\Vb^\ts\Vb=\Ib_d$.
\end{proof}


Our next result provides a critical lower bound that is instrumental in bounding $\|\pb(\hat{\betab})-\pb(\betab^*)\|_2$.

\begin{lemma}\label{lem:lb}
If the condition in eqn.~\eqref{eq:cond1} is satisfied, then $$\|\Ub^{\ts} \Sb^\ts\Sb\big(\mathbf{p}(\hat{\betab})-\mathbf{p}(\betab^*)\big)\|_2\ge (1-\nicefrac{\ve}{2})\|\mathbf{p}(\hat{\betab})-\mathbf{p}(\betab^*)\|_2.$$ 
\end{lemma}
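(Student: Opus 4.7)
The plan is to plug the specific vector $\xb := \pb(\hat{\betab}) - \pb(\betab^*)$ directly into the structural condition (\ref{eq:cond1}) and then exploit the fact that $\Ub$ has orthonormal columns. Unpacking the absolute value in (\ref{eq:cond1}) as a one-sided reverse-triangle inequality immediately yields
\begin{equation*}
\|\Ub^{\ts}\Sb^{\ts}\Sb(\pb(\hat{\betab}) - \pb(\betab^*))\|_2 \;\ge\; \|\Ub^{\ts}(\pb(\hat{\betab}) - \pb(\betab^*))\|_2 \;-\; \tfrac{\ve}{2}\,\|\pb(\hat{\betab}) - \pb(\betab^*)\|_2.
\end{equation*}
To match the target bound $(1-\tfrac{\ve}{2})\|\pb(\hat{\betab}) - \pb(\betab^*)\|_2$, it suffices to show that the leading norm on the right-hand side of the display equals $\|\pb(\hat{\betab}) - \pb(\betab^*)\|_2$.

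Because $\Ub \in \RR{n}{d}$ has orthonormal columns, $\Ub^{\ts}$ is norm-preserving on its own column span; in other words, $\|\Ub^{\ts}\zb\|_2 = \|\zb\|_2$ for every $\zb \in \text{range}(\Ub) = \text{range}(\Xb)$. Thus the step above reduces to establishing the containment $\pb(\hat{\betab}) - \pb(\betab^*) \in \text{range}(\Xb)$. Once this containment is in hand, combining it with the display finishes the proof in a single line.

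The principal obstacle is precisely the column-span containment. Writing the probabilities in the form $\pb(\betab) = \sigma(\Xb\betab)$ with the componentwise sigmoid $\sigma$, a fundamental-theorem-of-calculus representation gives $\pb(\hat{\betab}) - \pb(\betab^*) = \bigl(\int_0^1 \text{diag}(\sigma'(\Xb\betab_t))\,dt\bigr)\,\Xb(\hat{\betab}-\betab^*)$ with $\betab_t$ linearly interpolating between $\betab^*$ and $\hat{\betab}$; this places the difference in the range of a diagonally rescaled $\Xb$ rather than directly in $\text{range}(\Xb)$, so the containment is not automatic. I expect the cleanest way to close this gap to be a combined use of Lemma~\ref{lem:id} and the two MLE optimality conditions (\ref{eq:normaleq}) and (\ref{eq:submle}), which together pin down the component of $\pb(\hat{\betab}) - \pb(\betab^*)$ orthogonal to $\text{range}(\Xb)$ tightly enough to conclude the desired identity (or to absorb the slack into the existing $\tfrac{\ve}{2}$ term).
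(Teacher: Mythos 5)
Your reduction is correct up to the point where you need $\|\Ub^{\ts}(\hat{\pb}-\pb^*)\|_2=\|\hat{\pb}-\pb^*\|_2$, i.e.\ the containment $\hat{\pb}-\pb^*\in\mathrm{range}(\Ub)=\mathrm{range}(\Xb)$, and you correctly observe that this containment is not automatic --- but you then leave its proof as a conjecture, so the argument is incomplete. Moreover, the route you suggest for closing it cannot work: Lemma~\ref{lem:id} and the optimality conditions \eqref{eq:normaleq} and \eqref{eq:submle} only involve $\Xb^{\ts}$ (equivalently $\Ub^{\ts}$, possibly composed with $\Sb^{\ts}\Sb$) applied to the relevant residuals, so they carry no information whatsoever about the component of $\hat{\pb}-\pb^*$ orthogonal to $\mathrm{range}(\Ub)$ --- which is exactly the quantity you need to control. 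Your own integral representation already shows why: $\hat{\pb}-\pb^*$ lies in the range of a diagonally reweighted $\Xb$, not of $\Xb$ itself, and generically its orthogonal component is a non-negligible fraction of $\|\hat{\pb}-\pb^*\|_2$, so it can be neither forced to vanish by those identities nor ``absorbed into the $\ve/2$ term.''

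For comparison, the paper's proof does not substitute the specific vector into \eqref{eq:cond1}. It factors $\|\Ub^{\ts}\Sb^{\ts}\Sb(\hat{\pb}-\pb^*)\|_2$ as a ratio times $\|\hat{\pb}-\pb^*\|_2$, lower bounds the ratio by $\min_{\zb\ne\zero}\|\Ub^{\ts}\Sb^{\ts}\Sb\zb\|_2/\|\zb\|_2$, applies \eqref{eq:cond1} to the minimizer, and then identifies $\min_{\zb\ne\zero}\|\Ub^{\ts}\zb\|_2/\|\zb\|_2$ with $\sigma_{\min}(\Ub^{\ts})=1$. Note, however, that this identification is valid only when $\zb$ is restricted to $\mathrm{range}(\Ub)$: for $n>d$ the matrix $\Ub^{\ts}$ has a nontrivial kernel, so the unconstrained minimum over $\zb\in\R{n}$ is $0$, and the paper's step implicitly relies on the very same subspace issue that your proposal makes explicit but does not resolve. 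So you have located the genuine crux of the lemma; a complete proof needs either an argument that the relevant vector (or minimizer) can be taken in $\mathrm{range}(\Ub)$, or a strengthened structural condition controlling $\Ub^{\ts}\Sb^{\ts}\Sb$ on the orthogonal complement --- neither your proposed fix nor the steps you were hoping to borrow supplies this.
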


\begin{proof}
Assuming $\hat{\pb}\ne\pb^*$ (otherwise, we have nothing to prove), we rewrite the left hand side as

\begin{equation}
{
\begin{aligned}
&~\|\Ub^{\ts} \Sb^\ts\Sb\big(\hat{\pb}-\pb^*\big)\|_2=\frac{\|\Ub^{\ts} \Sb^\ts\Sb\big(\hat{\pb}-\pb^*\big)\|_2}{\|\hat{\pb}-\pb^*\|_2} \cdot \|\hat{\pb}-\pb^*\|_2\label{eq:lb1}
\end{aligned}
}
\end{equation}
We work on the first term on the right hand side of eqn.~\eqref{eq:lb1},
\begin{equation}
{
\begin{aligned}
\frac{\|\Ub^{\ts} \Sb^\ts\Sb(\hat{\pb}-\pb^*)\|_2}{\|\hat{\pb}-\pb^*\|_2}
\ge\min_{\zb\ne\zero}\frac{\|\Ub^{\ts} \Sb^\ts\Sb\zb\|_2}{\|\zb\|_2}
=\frac{\|\Ub^{\ts} \Sb^\ts\Sb\zb^*\|_2}{\|\zb^*\|_2}\label{eq:lb3}\,,
\end{aligned}
}
\end{equation}
where $\zb^*=\argmin_{\zb\ne\zero}\frac{\|\Ub^{\ts} \Sb^\ts\Sb\zb\|_2}{\|\zb\|_2}$.
From eqn.~\eqref{eq:cond1}, we have
{
\begin{flalign}
&\|\Ub^{\ts} \Sb^\ts\Sb\zb^*\|_2\geq \|\Ub^\ts\zb^*\|_2-\nicefrac{\ve}{2}\|\zb^*\|_2\nonumber\\
\Rightarrow&\frac{\|\Ub^{\ts}\Sb^\ts\Sb\zb^*\|_2}{\|\zb^*\|_2}\ge \frac{\|\Ub^\ts\zb^*\|_2}{\|\zb^*\|_2} - \frac{\ve}{2}\ge\min_{\zb\ne\zero}\frac{\|\Ub^\ts\zb\|_2}{\|\zb\|_2} - \frac{\ve}{2}\label{eq:lb2}
\end{flalign}
}
Combining eqns.~\eqref{eq:lb1},\eqref{eq:lb3} and \eqref{eq:lb2}, we further have
{
\begin{flalign}
\|\Ub^{\ts} \Sb^\ts\Sb\big(\hat{\pb}-\pb^*\big)\|_2
\ge&\left(\min_{\zb\ne\zero}\frac{\|\Ub^\ts\zb\|_2}{\|\zb\|_2} - \frac{\ve}{2}\right)\|\hat{\pb}-\pb^*\|_2\nonumber\\
=&\left(\sigma_{\min}(\Ub^\ts)-\nicefrac{\ve}{2}\right)\,\|\hat{\pb}-\pb^*\|_2\nonumber\\
=&(1-\nicefrac{\ve}{2})\|\hat{\pb}-\pb^*\|_2\nonumber\,,
\end{flalign}
}
where the first equality follows from the definition of minimum singular value of a matrix and $\sigma_{\min}(\Ub^\ts)$ denotes the minimum singular value of $\Ub^\ts$, which is equal to one as $\Ub$ has orthonormal columns. The proof is now complete.
\end{proof}
\textbf{Proof of Theorem~\ref{thm:main}.} Combining Lemma~\ref{lem:id}, Lemma~\ref{lem:lb}, and eqn.~\eqref{eq:cond2}, we directly have
\begin{equation}
{
\begin{aligned}
&\|\hat{\pb}-\pb^*\|_2\nonumber\\
&\le\frac{\|\Ub^{\ts} \Sb^\ts\Sb(\hat{\pb}-\pb^*)\|_2}{1-\nicefrac{\ve}{2}}
= \frac{\|\Ub^{\ts} \Sb^\ts\Sb\big(\mathbf{y}-\mathbf{p}^*\big)\|_2}{1-\nicefrac{\ve}{2}}\nonumber\\
&\le\frac{\frac{\ve}{2}\left\|\mathbf{y}-\mathbf{p}^*\right\|_2}{1-\nicefrac{\ve}{2}}\le\ve\,\left\|\mathbf{y}-\mathbf{p}^*\right\|_2\,,
\end{aligned}
}
\end{equation}
where the last inequality is due to the fact that $1-\ve/2>1/2$ as $0<\ve<$ 1. This concludes the proof. 
\hfill\(\square\)

\begin{figure*}[t]
\centering
\subfigure{%
  \includegraphics[width=0.32\textwidth]{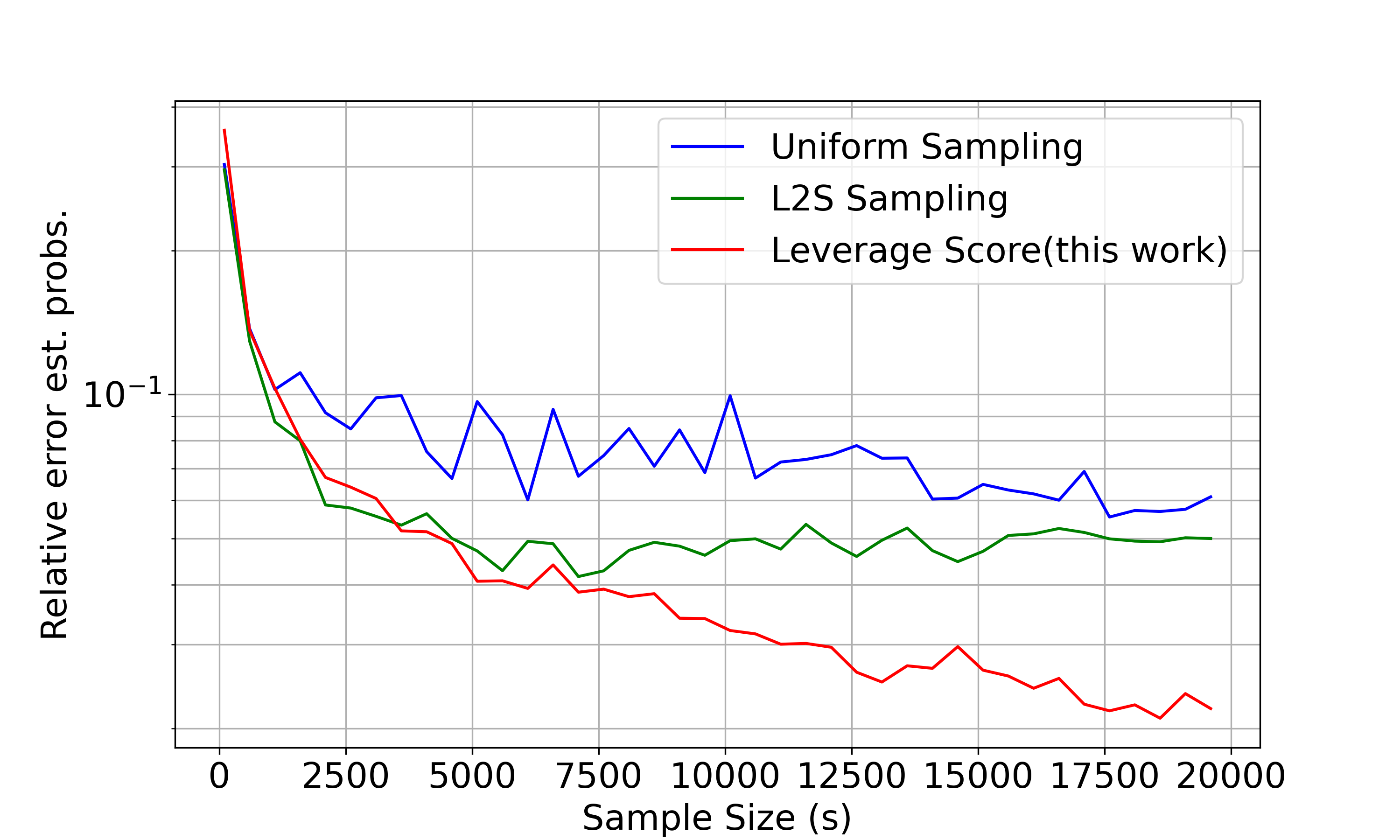}
  \label{fig:subfig10}
}
\hfill
\subfigure{%
  \includegraphics[width=0.32\textwidth]{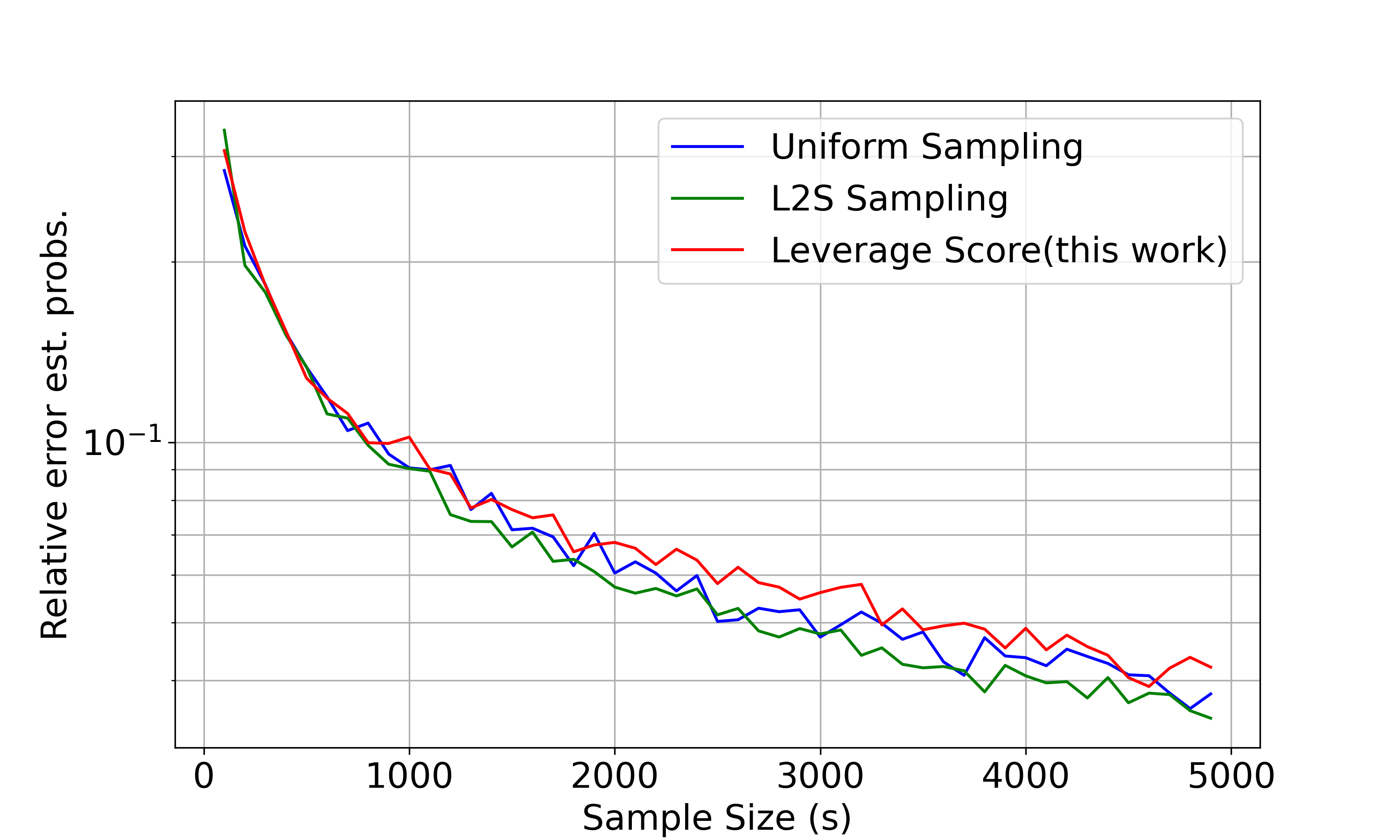}
  \label{fig:subfig11}
}
\hfill
\subfigure{
  \includegraphics[width=0.31\textwidth]{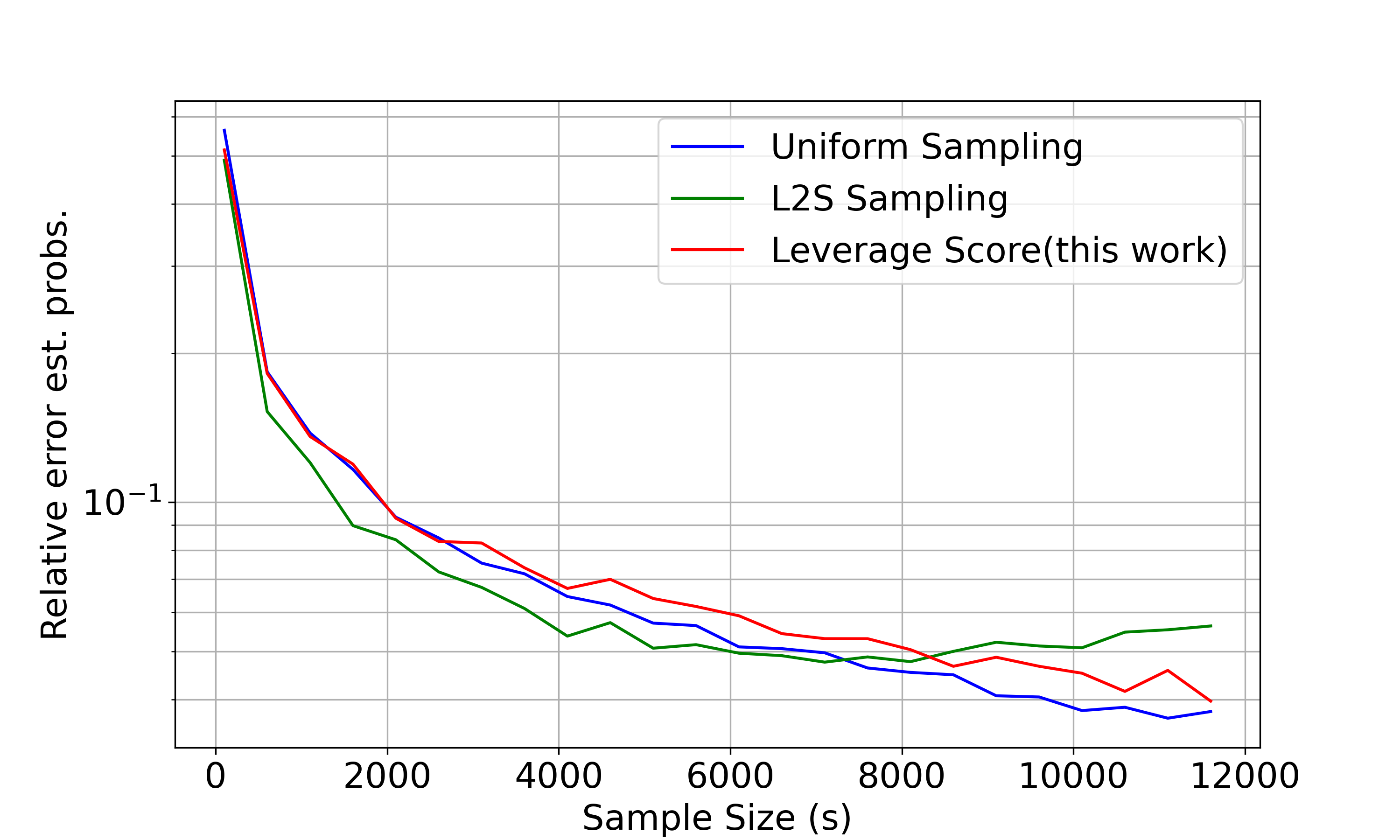}
  \label{fig:subfig12}
}\\

\addtocounter{subfigure}{-3}
\subfigure[\small cardio]{%
  \includegraphics[width=0.32\textwidth]{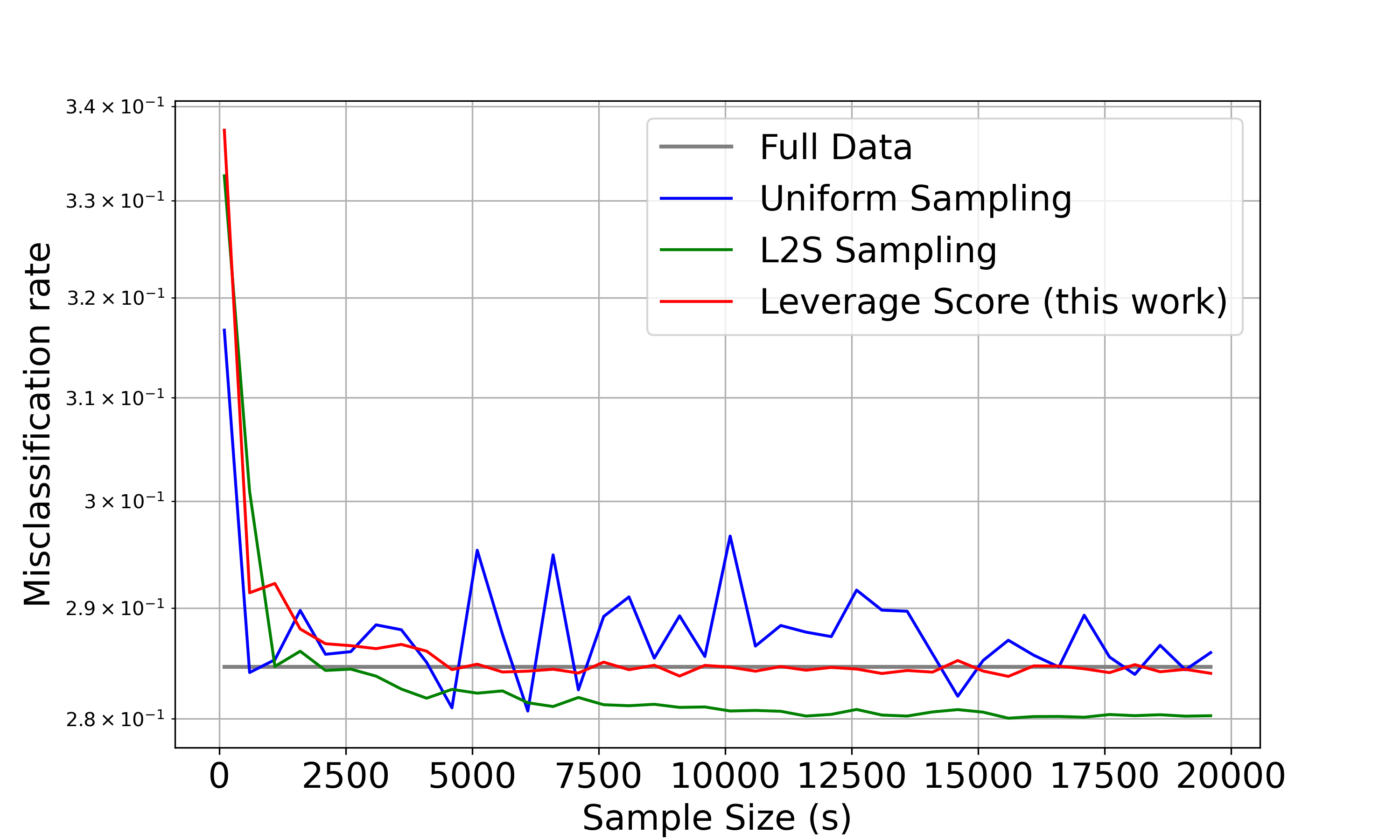}
  \label{fig:subfig13}
}
\hfill
\subfigure[\small churn]{%
  \includegraphics[width=0.32\textwidth]{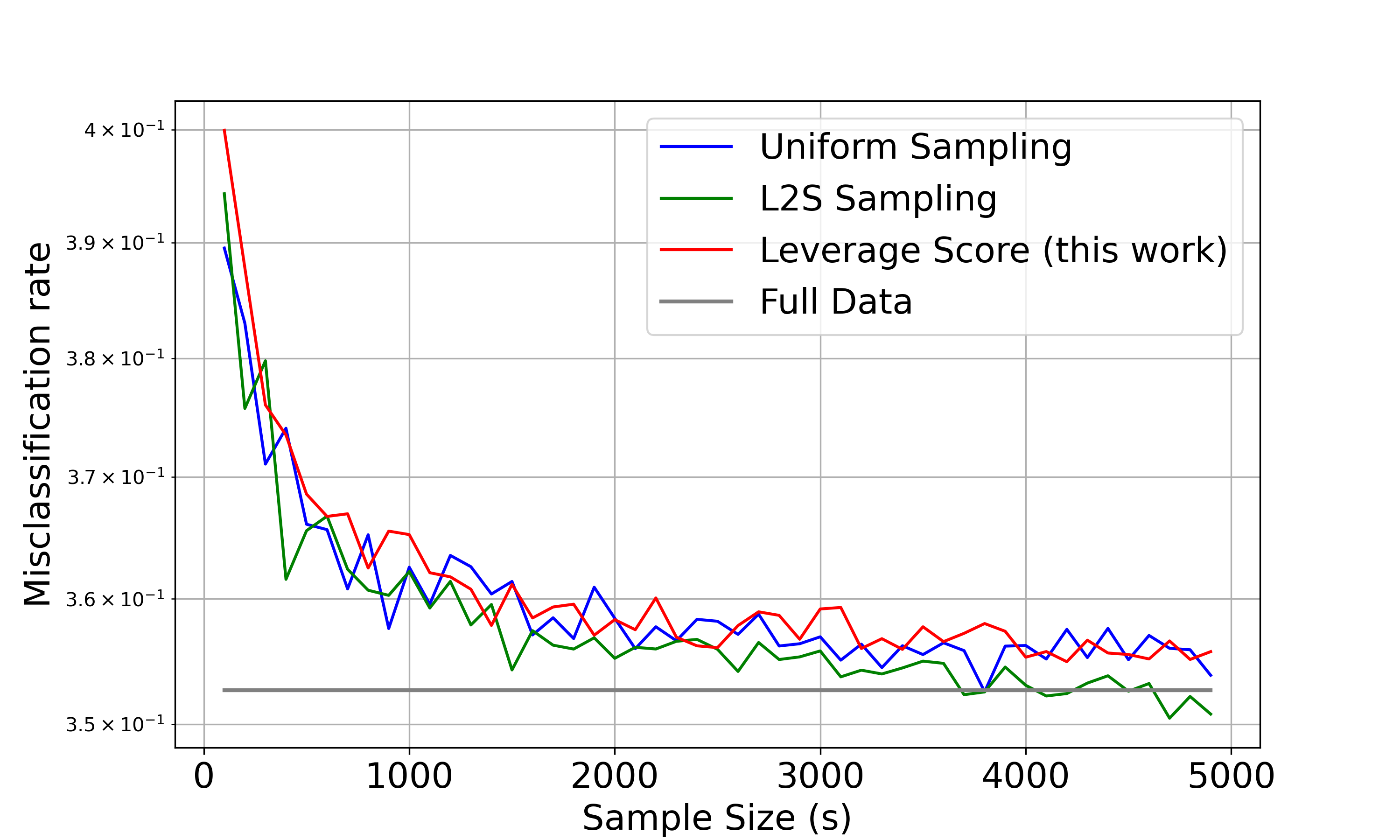}
  \label{fig:subfig14}
}
\hfill
\subfigure[\small default]{%
  \includegraphics[width=0.32\textwidth]{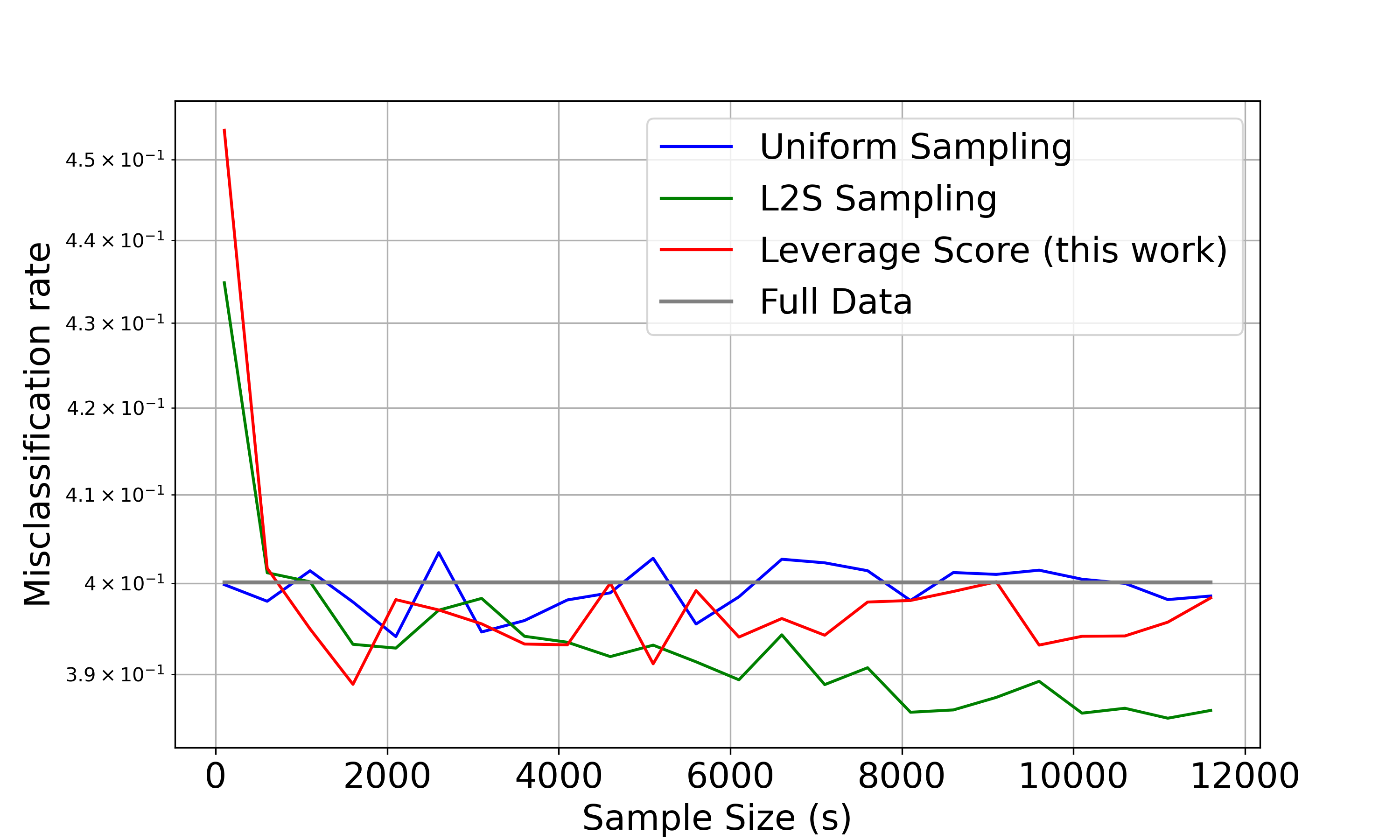}
  \label{fig:subfig15}
}\\[-4mm]

\caption{
Experiment results on real data:  
The top row of plots illustrates the relative errors in estimated probabilities and the bottom row shows misclassification rates. 
Errors are in log-scale.
}
\label{fig:mainfig2}
\end{figure*}

\section{Satisfying the Structural Conditions}
In this section, we demonstrate how to satisfy the constraints in eqns.\eqref{eq:cond1} and \eqref{eq:cond2} using the sampling-based sketching matrix $\Sb$ constructed via Algorithm~\ref{algo:constructS}. As space is limited, some of our proofs are deferred to the Appendix. Nevertheless, to offer insights into the mathematical derivations supporting our contributions, we outline the proofs as follows. Also, similar to Section~\ref{sxn:proofs}, we frequently write $\pb^*$ to denote $\pb(\betab^*)$. First, we state a fundamental result from the randomized matrix multiplication literature.
\begin{lemma}\label{lem:rmm}
Let $\mathbf{\Ub} \in \mathbb{R}^{n \times d}$ be the matrix of the left singular vectors of $\Xb$, and $\xb \in \R{n}$ be any vector. Furthermore, let $\mathbf{S} \in \mathbb{R}^{s \times n}$ is constructed using Algorithm~\ref{algo:constructS}. Then,
{
\begin{flalign}
&\mathbb{E}\left(\left\|\mathbf{\Ub}^\ts \mathbf{\Sb}^{\top} \mathbf{\Sb}\xb-\mathbf{U}^\ts\xb\right\|_2^2\right) \leq \sum_{i=1}^n \frac{\left\|\mathbf{U}_{i*}\right\|_2^2 \cdot x_i^2}{s \pi_i}\nonumber
\end{flalign}
}
\end{lemma}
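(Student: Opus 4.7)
The plan is to recognize the quantity $\mathbf{U}^\ts \mathbf{S}^\ts \mathbf{S} \mathbf{x}$ as a sample-average estimator of $\mathbf{U}^\ts \mathbf{x}$ built from the sampling described in Algorithm~\ref{algo:constructS}, and then bound its second moment by a standard i.i.d.\ variance computation. Writing $j_1,\dots,j_s$ for the $s$ independent indices drawn with $\pr{j_k=i}=\pi_i$, one has
\begin{equation*}
\mathbf{U}^\ts \mathbf{S}^\ts \mathbf{S}\, \mathbf{x} \;=\; \frac{1}{s}\sum_{k=1}^s \mathbf{Y}_k, \qquad \mathbf{Y}_k \;\defeq\; \frac{x_{j_k}}{\pi_{j_k}}\,\mathbf{U}_{j_k *}^\ts,
\end{equation*}
so the $\mathbf{Y}_k$ are i.i.d.\ $\R{d}$-valued random vectors.

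First I would verify unbiasedness: a direct computation gives $\Ex{\mathbf{Y}_k}=\sum_{i=1}^n \pi_i \cdot \frac{x_i}{\pi_i}\mathbf{U}_{i*}^\ts = \mathbf{U}^\ts \mathbf{x}$, hence $\Ex{\mathbf{U}^\ts \mathbf{S}^\ts \mathbf{S} \mathbf{x}} = \mathbf{U}^\ts \mathbf{x}$. Next I would expand the squared Euclidean norm of the centered sum and use independence to kill the cross terms:
\begin{equation*}
\Ex{\|\mathbf{U}^\ts \mathbf{S}^\ts \mathbf{S} \mathbf{x} - \mathbf{U}^\ts \mathbf{x}\|_2^2} \;=\; \frac{1}{s^2}\sum_{k=1}^s \Ex{\|\mathbf{Y}_k - \Ex{\mathbf{Y}_k}\|_2^2} \;=\; \frac{1}{s}\,\Ex{\|\mathbf{Y}_1 - \Ex{\mathbf{Y}_1}\|_2^2}.
\end{equation*}

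Then I would apply the elementary inequality $\Ex{\|\mathbf{Y}_1 - \Ex{\mathbf{Y}_1}\|_2^2} \le \Ex{\|\mathbf{Y}_1\|_2^2}$ (the variance of a random vector is bounded by its raw second moment) and compute the latter directly from the sampling distribution:
\begin{equation*}
\Ex{\|\mathbf{Y}_1\|_2^2} \;=\; \sum_{i=1}^n \pi_i \cdot \frac{x_i^2}{\pi_i^2}\,\|\mathbf{U}_{i*}\|_2^2 \;=\; \sum_{i=1}^n \frac{\|\mathbf{U}_{i*}\|_2^2\,x_i^2}{\pi_i}.
\end{equation*}
Dividing by $s$ yields the claimed bound.

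I do not expect any real obstacle here; the proof is a textbook randomized matrix multiplication argument and the main subtleties are just bookkeeping: making sure the rescaling $(s\pi_{j_k})^{-1/2}$ from Algorithm~\ref{algo:constructS} produces the right $1/s$ factor in front of the sum, and invoking independence (the $j_k$ are drawn i.i.d.) so that only the diagonal terms in the expanded squared norm survive.
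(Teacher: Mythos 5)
Your proof is correct and takes essentially the same route as the paper's: both rest on unbiasedness of $\mathbf{U}^\ts\mathbf{S}^\ts\mathbf{S}\xb$, use independence of the $s$ draws to reduce the error to a single-sample variance scaled by $1/s$, and drop the same nonnegative term ($\|\mathbf{U}^\ts\xb\|_2^2/s$) to reach the stated bound. The only difference is bookkeeping: you argue at the vector level with the i.i.d.\ average $\frac{1}{s}\sum_k \mathbf{Y}_k$, whereas the paper computes the expectation and variance coordinate-by-coordinate (its Lemma~A1) and sums over the $d$ coordinates at the end.
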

Lemma~\ref{lem:rmm}, with a more general formulation, was originally introduced in \cite{Dr06} where $\Ub$ can be any matrix, the vector $\xb$ can be replaced with another matrix, and the left-hand side represents the expectation of squared Frobenius norm. However, for our specific purpose, we narrow it down to $\Ub$ being the matrix of the left singular vectors of $\Xb$, and $\xb$ being a vector. 
For completeness, we prove it in the Appendix~\ref{app:rmm}.

Next result is a special case of Lemma~\ref{lem:rmm} where the $\pi_i$'s \ie, the sampling probabilities are proportional to the row leverage scores of $\Xb$.
\begin{lemma}\label{lem:lev}
Let matrix $\mathbf{\Ub}$ and vector $\xb$ are as defined in Lemma~\ref{lem:rmm}. If the sketching matrix $\mathbf{S} \in \mathbb{R}^{s \times n}$ is constructed using Algorithm~\ref{algo:constructS}, with sampling probabilities 
$\pi_i=\nicefrac{\|\Ub_{i*}\|_2^2}{\|\Ub\|_{\mathrm{F}}^2}$,
for $i=1,\dots,n$. Then,
{
\begin{flalign}
&\mathbb{E}\left(\left\|\mathbf{\Ub}^\ts \mathbf{\Sb}^{\top} \mathbf{\Sb}\xb-\mathbf{U}^\ts\xb\right\|_2^2\right) \leq \frac{d}{s}\,\|\xb\|_2^2\nonumber
\end{flalign}
}
\end{lemma}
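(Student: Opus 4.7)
The plan is to invoke Lemma~\ref{lem:rmm} directly and then exploit the special structure of the leverage score probabilities to simplify the resulting bound. Specifically, Lemma~\ref{lem:rmm} gives
$$\mathbb{E}\left(\|\Ub^\ts \Sb^\ts \Sb \xb - \Ub^\ts \xb\|_2^2\right) \le \sum_{i=1}^n \frac{\|\Ub_{i*}\|_2^2 \cdot x_i^2}{s\,\pi_i},$$
so the first step is simply to substitute $\pi_i = \|\Ub_{i*}\|_2^2 / \|\Ub\|_{\mathrm{F}}^2$ into this expression.

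After substitution, the factor $\|\Ub_{i*}\|_2^2$ in the numerator cancels with the identical factor in the denominator of $\pi_i$, and the constant $\|\Ub\|_{\mathrm{F}}^2$ pulls out of the sum. This collapses the right-hand side to $(\|\Ub\|_{\mathrm{F}}^2 / s)\sum_{i=1}^n x_i^2 = (\|\Ub\|_{\mathrm{F}}^2 / s)\,\|\xb\|_2^2$.

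The final step is to evaluate $\|\Ub\|_{\mathrm{F}}^2$. Because $\Xb$ has rank $d$, its thin SVD produces $\Ub \in \RR{n}{d}$ with orthonormal columns, so $\Ub^\ts \Ub = \Ib_d$ and hence $\|\Ub\|_{\mathrm{F}}^2 = \tr(\Ub^\ts\Ub) = d$. Plugging this in yields the claimed $(d/s)\|\xb\|_2^2$ bound. There is no real obstacle here: the result is essentially a one-line corollary of Lemma~\ref{lem:rmm}, and the only thing worth flagging is the use of the orthonormality of the columns of $\Ub$ to identify $\|\Ub\|_{\mathrm{F}}^2$ with $d$.
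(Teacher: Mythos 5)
Your proposal is correct and matches the paper's argument exactly: substitute the leverage score probabilities into the bound of Lemma~\ref{lem:rmm}, cancel the $\|\Ub_{i*}\|_2^2$ factors, and use $\|\Ub\|_{\mathrm{F}}^2 = \tr(\Ub^\ts\Ub) = d$ from the orthonormality of the columns of $\Ub$. Nothing further is needed.
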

Proof of Lemma~\ref{lem:lev} is immediate from Lemma~\ref{lem:rmm} with $\pi_i=\smallfrac{\|\Ub_{i*}\|_2^2}{\|\Ub\|_{\mathrm{F}}^2}$, and the fact that $\|\Ub\|_{\mathrm{F}}^2=d$ (because $\Ub^\ts\Ub=\Ib_d$).
\subsection{Sample Complexity}
For the condition in eqn.~\eqref{eq:cond2}, we apply the Markov's inequality with Lemma~\ref{lem:lev} and use the fact that $\Ub^{\ts} (\mathbf{y}-\mathbf{p}(\betab^*))=\zero$ (this can be directly derived by applying thin SVD of $\Xb$ on eqn.~\eqref{eq:normaleq} and pre-multiplying the resulting equation by $\Sigmab^{-1}\Vb^\ts$),
\begin{flalign}
&\PP\left(\nbr{\Ub^{\ts}\Sb^\ts\Sb(\mathbf{y}-\mathbf{p}^*)}_2\ge\frac{\ve}{2}\nbr{\mathbf{y}-\mathbf{p}^*}_2\right)\nonumber\\
=&\PP\left(\nbr{\Ub^{\ts}\Sb^\ts\Sb(\mathbf{y}-\mathbf{p}^*)-\Ub^{\ts} (\mathbf{y}-\mathbf{p}^*)}_2\ge\frac{\ve}{2}\nbr{\mathbf{y}-\mathbf{p}^*}_2\right)\nonumber\\
\le&\frac{4\,\EE\left(\nbr{\Ub^{\ts}\Sb^\ts\Sb(\mathbf{y}-\mathbf{p}^*)-\Ub^{\ts} (\mathbf{y}-\mathbf{p}^*)}_2^2\right)}{\ve^2\nbr{\mathbf{y}-\mathbf{p}^*}_2^2}\nonumber\\
\le&\frac{4\,d\,\nbr{\mathbf{y}-\mathbf{p}^*}_2^2}{s\ve^2\,\nbr{\mathbf{y}-\mathbf{p}^*}_2^2}=\frac{4\,d}{s\ve^2}\label{eq:cond2proof}\,.
\end{flalign}

For the condition in eqn~\eqref{eq:cond1}, we have from the reverse triangle inequality, 
\begin{flalign}
&\abs{\nbr{\Ub^{\ts}\Sb^\ts\Sb\xb}_2-\nbr{\Ub^{\ts}\xb}_2}\le \nbr{\Ub^{\ts}\Sb^\ts\Sb\xb-\Ub^{\ts}\xb}\nonumber\\
\ie,&\left(\abs{\|\Ub^\ts\Sb^\ts\Sb\xb\|_2-\|\Ub^\ts\xb\|_2\|}\ge \frac{\ve}{2}\|\xb\|_2\right)~~\text{implies}~\left(\|\Ub^\ts\Sb^\ts\Sb\xb-\Ub^\ts\xb\|_2\ge \frac{\ve}{2}\|\xb\|_2\right)\,,\nonumber\\
\ie,&~\PP\left(\abs{\|\Ub^\ts\Sb^\ts\Sb\xb\|_2-\|\Ub^\ts\xb\|_2\|}\ge \frac{\ve}{2}\|\xb\|_2\right)~\le\PP\left(\|\Ub^\ts\Sb^\ts\Sb\xb-\Ub^\ts\xb\|_2\ge \frac{\ve}{2}\|\xb\|_2\right)\label{eq:cond1id}
\end{flalign}
Similar to eqn.~\eqref{eq:cond2proof}, applying Markov's inequality on the right hand side of eqn.~\eqref{eq:cond1id} and using Lemma~\ref{lem:lev}, we have
\begin{flalign}
\PP\left(\abs{\|\Ub^\ts\Sb^\ts\Sb\xb\|_2-\|\Ub^\ts\xb\|_2\|}\ge \frac{\ve}{2}\|\xb\|_2\right)\le \frac{4\,d}{s\ve^2}\label{eq:cond1proof}\,.
\end{flalign}

Now, for a failure probability $0<\delta<1$, if we set the sample size $s\ge\frac{8d}{\delta\ve^2}$, eqns.~\eqref{eq:cond2proof} and\eqref{eq:cond1proof} boil down to
\begin{flalign}
&~\PP\left(\nbr{\Ub^{\ts}\Sb^\ts\Sb(\mathbf{y}-\mathbf{p}^*)}_2\ge\frac{\ve}{2}\nbr{\mathbf{y}-\mathbf{p}^*}_2\right)\le\frac{\delta}{2}\label{eq:union1}\\
&~\PP\left(\abs{\|\Ub^\ts\Sb^\ts\Sb\xb\|_2-\|\Ub^\ts\xb\|_2\|}\ge \frac{\ve}{2}\|\xb\|_2\right)\le \frac{\delta}{2}\label{eq:union2}
\end{flalign}

Finally, applying the union bound to eqns.~\eqref{eq:union1} and \eqref{eq:union2}, we conclude that
if the number of sampled rows $s$ satisfies
$$s\ge\frac{8d}{\delta\ve^2}\,,$$
then both structural conditions of Theorem~\ref{thm:main}, namely  eqns.~\eqref{eq:cond1} and \eqref{eq:cond2} hold with probability at least $1-\delta$.

\begin{remark}
Here, it's worth highlighting that only a single structural condition, namely $\|\Ub^{\ts} \Sb^\ts\Sb\xb-\Ub^\ts\xb\|_2\leq \ve\,\left\|\xb\right\|_2$, suffices to establish our bound. Indeed, employing the lower bound of the triangle inequality to the aforementioned constraint leads to the condition presented in eqn.~\eqref{eq:cond1}. Similarly, the condition described in eqn.~\eqref{eq:cond2} can be readily deduced by simply taking $\xb=\yb-\pb^*$ and utilizing the fact that $\Ub^\ts(\yb-\pb^*)=\mathbf{0}$ in this context. This way, we can further reduce the constant in the lower bound on sample size $s$ mentioned above. However, for the sake of clarity for general readers, we break this down into two separate conditions: one for the upper bound and one for the lower bound on $\|\Ub^{\ts} \Sb^\ts\Sb\big(\hat{\pb}-\pb^*\big)\|_2$.    
\end{remark}


\noindent
\textbf{Running Time.}
As discussed in Section~\ref{sxn:intro}, 
cost of computing the full data MLE $\betab^*$ is dominated by the cost of computing the inverse of the Hessian matrix, $-(\Xb^\top\Wb\Xb)^{-1}$, at each iteration of the IRLS algorithm, which takes $\Ocal(nd^2)$ time.
In contrast, our proposed Algorithm~\ref{algo:main} offers a more efficient approach. In our setting, the inverse of the Hessian matrix of the subsampled log-likelihood function $\bar{\ell}(\betab)$ is given by,
$
{\small\left[\frac{\partial^2 \bar{\ell}(\boldsymbol{\beta})}{\partial \boldsymbol{\beta}\partial\betab^\top}\right]^{-1} = -(\Xb^\top\Wb^{1/2}\Sb^\top\Sb\Wb^{1/2}\Xb)^{-1}}.
$
Recall that $\Wb\in\RR{n}{n}$ is a diagonal matrix. Therefore, with our chosen sample size $s={\small\mathcal{O}(\nicefrac{d}{\epsilon^2}})$, this can be computed in $\mathcal{O}(\text{nnz}(\Xb) + \nicefrac{d^3}{\epsilon^2})$ time, where $\text{nnz}(\Xb)$ represents the number of non-zero elements in matrix $\Xb$. Additionally, it's also worth mentioning that approximate leverage scores are sufficient for our purpose. Their computation can be efficiently done without the need to compute $\Ub$, achieving a time complexity of $\Ocal\left(\mathrm{nnz}(\Xb) \log n + d^3 \log^2 d + d^2 \log n\right)$ due to \cite{clarkson2017low}.

\section{Empirical Evaluation}\label{sxn:exp}
\textbf{Datasets.} 
First, we provide a brief introduction to the datasets used in our empirical evaluations. We have applied our algorithm to three distinct real-world datasets. The first dataset, sourced from Kaggle, is the \emph{Cardiovascular disease dataset}~\cite{cardio}, featuring $70,000\times 12$ patient records with a $50\%$ positive case occurrence. This dataset aims to predict the presence of cardiovascular disease. The second dataset, also from Kaggle, is the \emph{Bank customer churn prediction dataset}, containing $10,000\times 10$ records with a $20\%$ positive case prevalence, focusing on the classification of customer departure likelihood. The third and final dataset, named the \emph{Default of credit card clients dataset}, is sourced from the UCI ML Repository \cite{misc_default_of_credit_card_clients_350}. It consists of $30,000\times 24$ records with a $22\%$ positive case ratio and aims to predict the probability of credit card default in the future.

\textbf{Comparisons and metrics.} In our experiments, we compare three different sampling schemes: selecting rows ($i$) uniformly at random, ($ii$) proportional to their row leverage scores (this work), and ($iii$) using the L2S method of \cite{munteanu2018coresets}. For each sampling method, we run Algorithm~\ref{algo:main} with varying sample sizes and measure two key metrics which are the most relevant to our analysis, namely,  (\emph{i}) the relative error of the estimated probabilities \ie $\nicefrac{\|\hat{\pb}-\pb^*\|_2}{\|\pb^*\|_2}$ and (\emph{ii}) the misclassification rates. Each experiment is run $20$ times and we report the means of the aforementioned metrics. Notably, we exclude the method from \cite{mai2021coresets} for comparison, as they already extensively compared their work with \cite{munteanu2018coresets}. The performance of \cite{mai2021coresets} closely aligns with \cite{munteanu2018coresets}, with very marginal variations observed for logistic regression. Therefore, we include only \cite{munteanu2018coresets} as we do not anticipate any significant differences in performance between ours and \cite{mai2021coresets}.

\textbf{Results.} 
The first set of results are presented in Figure~\ref{fig:mainfig2}. In the top row, we present relative errors in terms of estimated probabilities. For the first dataset (cardiovascular disease), our sampling approach based on row leverage scores (red) consistently outperforms both L2S (green) and uniform sampling (blue) and our method gets better as the sample size $s$ increases. For the remaining two datasets (last two columns of Figure~\ref{fig:mainfig2}), both the L2S and uniform sampling methods demonstrate marginally better performance in general, except for the fact that our method gets better than L2S for larger $s$ in the third column. However, it is noteworthy that the errors in all three methods are consistently very close to each other and become smaller as $s$ increases. Therefore, the crucial point to note is that our leverage score-based approach indeed works well in practice and demonstrates very comparable results to the other two methods, thereby validating our theoretical bound.
In the bottom row of Figure~\ref{fig:mainfig2}, we present a comparison of the misclassification rates. For the first and third datasets with moderate sample sizes, our leverage score-based approach achieves misclassification rates that are either lower than or very close to the misclassification rate of the full-data model (gray). In contrast, L2S performs slightly better, while uniform sampling performs slightly worse than ours. As for the second dataset (middle column), all three approaches perform comparably, with their respective misclassification rates decreasing and converging to that of the full-data model as $s$ increases. Overall, we would like to emphasize that our plots are on log scale and if we look at actual numbers on the $y$-axis, the difference with \cite{munteanu2018coresets} is indeed very small.

For completeness, we also compare our method with respect to the relative-error nagative log-likelihoods and due to space constraints, the plots are given in the Appendix\footnote{\url{https://github.com/AgnivaC/SubsampledLogisticRegression}}. For the same reason, we also postpone some additional experiments in Appendix (\eg, the plots for the standard deviations from the $20$ runs for each of the experiments conducted.

Finally, we want to highlight that our experiments are preliminary proof-of-concept showing that our leverage score-based sampling scheme for logistic regression works well in practice and performs very comparably to the prior work. While we use the \texttt{numpy.linalg.svd} routine to compute our leverage score-based sampling probabilities, \cite{munteanu2018coresets} employed a fast, randomized sketching-based implementation to compute the leverage scores which were subsequently used to calculate the L2S sampling probabilities. This method can also be seamlessly applied in our context for the leverage score computation. Therefore, given the architecture and specific optimization method, running time of our algorithm will be highly comparable to that of \cite{munteanu2018coresets}.
\section{Conclusion and Future Work}
We have presented a simple structural result to analyze a randomized sampling-based algorithm for the logistic regression problem, guaranteeing highly accurate solutions in terms of both the estimated probabilities and overall discrepancy. There are several immediate future directions that can be explored further.   
In terms of future research, it is important to explore whether similar bounds can be derived using random projection-based oblivious sketching matrices. This includes exploring techniques like sparse subspace embeddings as presented in \cite{cohen2016nearly}, very sparse subspace embeddings of \cite{clarkson2017low}, Gaussian sketching matrices, or even combinations of both approaches as outlined in \cite{Cohen2016}. The key challenge is when $\Sb$ is a general sketching matrix, $\Sb^\ts\Sb$ is not necessarily a diagonal matrix. Therefore it is not clear how to formulate an identity similar to eqn.~\eqref{eq:submle} with general sketching matrices.
Secondly, it's worth noting that we have employed the IRLS method in our algorithm as a black-box. Therefore, an obvious future direction would be to further investigate how the errors stemming from the IRLS solver propagate and affect our bound.
Lastly, logistic regression finds numerous applications in high-dimensional data scenarios including genomics and bioinformatics, medical diagnostics, image analysis and many more. Hence, exploring similar bounds in high dimensions, i.e., when $n\ll d$, would be intriguing.


\paragraph{Acknowledgments.}
We would like to thank the anonymous reviewers for their helpful comments. Special thanks to Petros Drineas for the insightful discussions. This work was supported by the Artificial Intelligence Initiative as part of the Laboratory Directed Research and Development (LDRD) Program at the Oak Ridge National Laboratory (ORNL). ORNL is operated by UT-Battelle, LLC., for the U.S. Department of Energy under Contract DE-AC05-00OR22725. This manuscript has been
authored by UT-Battelle, LLC. The US government retains and the publisher, by accepting the article
for publication, acknowledges that the US government retains a nonexclusive, paid-up, irrevocable,worldwide license to publish or reproduce the published form of this manuscript, or allow others to do so, for US government purposes. DOE will provide public access to these results of federally sponsored research in accordance with the DOE Public Access Plan (\url{http://energy.gov/downloads/doe-public-access-plan}).



\setlength{\bibsep}{6pt}
\bibliographystyle{abbrvnat}
{
\bibliography{bibliography.bib}
}

\appendix

\section{Proof of Lemma~\ref{lem:rmm}}\label{app:rmm}

\begin{supplemma}\label{lem:exp_var}
Let matrix $\mathbf{\Ub}$ and vector $\xb$ are as defined in Lemma~\ref{lem:rmm}. If the sketching matrix $\mathbf{S} \in \mathbb{R}^{s \times n}$ is constructed using Algorithm~\ref{algo:constructS}. Then, for $i=1,\dots,d$, we have
\begin{flalign}
&\text(a)~~\EE \left( ( \mathbf{U}^\ts \mathbf{S}^\ts \mathbf{S} \mathbf{x} )_i \right)= 
( \mathbf{U}^\ts\mathbf{x} )_i\\
&\text(b)~~\var \left( ( \mathbf{U}^\ts \mathbf{S}^\ts \mathbf{S} \mathbf{x} )_i \right)=
\frac{1}{s} \sum_{j=1}^n\frac{ (\mathbf{U}^\ts)_{ij}^2 x_{j}^2}{\pi_{j}}-\frac{1}{s} (\Ub^\ts\xb)_{i}^2\nonumber
%
\end{flalign}
\end{supplemma}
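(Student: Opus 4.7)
The plan is to exploit the fact that, by the construction of $\Sb$ in Algorithm~\ref{algo:constructS}, the quantity $(\mathbf{U}^\ts \Sb^\ts \Sb \xb)_i$ naturally decomposes into a sum of $s$ i.i.d.\ random variables, one per row of $\Sb$. Concretely, for each trial $k=1,\dots,s$, only one column of row $k$ of $\Sb$ is non-zero (indexed by the random $j_k$ with value $1/\sqrt{s\pi_{j_k}}$), so
\begin{equation*}
(\mathbf{U}^\ts \Sb^\ts \Sb \xb)_i \;=\; \sum_{k=1}^s Y_k^{(i)}, \qquad \text{where } Y_k^{(i)} \defeq \frac{(\mathbf{U}^\ts)_{i\,j_k}\, x_{j_k}}{s\,\pi_{j_k}},
\end{equation*}
and $j_1,\dots,j_s$ are i.i.d.\ draws from the distribution $\{\pi_1,\dots,\pi_n\}$.

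For part (a), I would compute $\EE[Y_k^{(i)}]$ by summing over the possible values of $j_k$: the factor $\pi_\ell$ from the sampling distribution cancels the $\pi_\ell$ in the denominator, yielding $\EE[Y_k^{(i)}] = \tfrac{1}{s}(\mathbf{U}^\ts\xb)_i$. Summing over the $s$ i.i.d.\ trials gives $(\mathbf{U}^\ts\xb)_i$ by linearity of expectation.

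For part (b), the independence of $j_1,\dots,j_s$ implies $\var(\sum_k Y_k^{(i)}) = s\cdot\var(Y_k^{(i)})$. Then $\var(Y_k^{(i)}) = \EE[(Y_k^{(i)})^2] - (\EE[Y_k^{(i)}])^2$, where the second moment evaluates to $\tfrac{1}{s^2}\sum_j (\mathbf{U}^\ts)_{ij}^2 x_j^2/\pi_j$ (again by the cancellation of one $\pi_j$ factor) and the squared mean gives $\tfrac{1}{s^2}(\mathbf{U}^\ts\xb)_i^2$. Multiplying by $s$ produces exactly the claimed expression.

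I do not foresee any genuine obstacle here: the argument is a textbook moment computation for a sum of i.i.d.\ importance-weighted samples, analogous to the variance analysis underlying randomized matrix multiplication as in~\cite{Dr06}. The only mild care needed is keeping the indexing straight between $\Ub$ and $\Ub^\ts$ (so that $(\mathbf{U}^\ts)_{ij} = U_{ji}$) and recognizing that each row of $\Sb$ contributes one independent term to the sum, so the $s$ trials decouple cleanly in both the mean and variance calculations.
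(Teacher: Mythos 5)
Your proposal is correct and follows essentially the same route as the paper's proof: the same decomposition of $(\Ub^\ts\Sb^\ts\Sb\xb)_i$ into the $s$ i.i.d.\ terms $\frac{(\Ub^\ts)_{ij_k}x_{j_k}}{s\,\pi_{j_k}}$, the same cancellation of $\pi_j$ in the first and second moments, and the same use of additivity of variance over independent trials.
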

%
%
\begin{proof}[Proof of Part (a)] From the definition of expectation, we have,
\begin{flalign}
    \EE \left( ( \mathbf{U}^\ts \mathbf{S}^\ts \mathbf{S} \mathbf{x} )_i \right)
    =&~\EE \left( (\mathbf{U}^\ts)_{i*} \mathbf{S}^\ts \mathbf{S} \mathbf{x} \right)
    = \EE \left(\sum_{t=1}^s\frac{ (\mathbf{U}^\ts)_{ij_t} x_{j_t}}{s \pi_{j_t}} \right)
    =\sum_{t=1}^s\EE\left(\frac{ (\mathbf{U}^\ts)_{ij_t} x_{j_t}}{s \pi_{j_t}}\right)\nonumber\\
    =&~\sum_{t=1}^s \sum_{j=1}^n \frac{ (\mathbf{U}^\ts)_{ij} x_{j}}{s \pi_{j}}\,\pi_j
    =\frac{1}{s} \sum_{t=1}^s \sum_{j=1}^n  (\mathbf{U}^\ts)_{ij} x_{j}= \sum_{j=1}^n  (\mathbf{U}^\ts)_{ij}x_j
    = ( \mathbf{U}^\ts\mathbf{x} )_i\label{eq:exp}
\end{flalign}
\end{proof}
%
%
\begin{proof}[Proof of Part (b)] Since the samples are drawn independently (with replacement), from the \emph{additivity of variances} property we have,  
\begin{flalign}
\var \left( ( \mathbf{U}^\ts \mathbf{S}^\ts \mathbf{S} \mathbf{x} )_i \right)
=&~\var \left( (\mathbf{U}^\ts)_{i*} \mathbf{S}^\ts \mathbf{S} \mathbf{x} \right)
= \var \left(\sum_{t=1}^s\frac{ (\mathbf{U}^\ts)_{ij_t} x_{j_t}}{s \pi_{j_t}} \right)
=\sum_{t=1}^s\var\left(\frac{ (\mathbf{U}^\ts)_{ij_t} x_{j_t}}{s \pi_{j_t}}\right)\nonumber\\
=&~\sum_{t=1}^s\left(\EE\left(\frac{ (\mathbf{U}^\ts)_{ij_t} x_{j_t}}{s \pi_{j_t}}\right)^2-\EE^2\left(\frac{ (\mathbf{U}^\ts)_{ij_t} x_{j_t}}{s \pi_{j_t}}\right)\right)\nonumber\\
=&~\sum_{t=1}^s\left(\sum_{j=1}^n\left(\frac{ (\mathbf{U}^\ts)_{ij} x_{j}}{s \pi_{j}}\right)^2\cdot\pi_j - \left(\sum_{j=1}^n\frac{ (\mathbf{U}^\ts)_{ij} x_{j}}{s \pi_{j}}\cdot\pi_j\right)^2\right)\nonumber\\
=&~\frac{1}{s} \sum_{j=1}^n\frac{ (\mathbf{U}^\ts)_{ij}^2 x_{j}^2}{\pi_{j}}-\frac{1}{s} (\Ub^\ts\xb)_{i}^2\label{eq:var}
\end{flalign}
\end{proof}

\textbf{Proof of Lemma~\ref{lem:rmm}.} Now we are ready to prove Lemma~\ref{lem:rmm}. First, from Lemma~\ref{lem:exp_var}, we already know that $ ( \mathbf{U}^\ts \mathbf{S}^\ts \mathbf{S} \mathbf{x} )_i$ is an unbiased estimator of $( \mathbf{U}^\ts\mathbf{x} )_i$, for all $i=1,\dots d$,\,\ie, $\EE \left( ( \mathbf{U}^\ts \mathbf{S}^\ts \mathbf{S} \mathbf{x} )_i \right)= 
( \mathbf{U}^\ts\mathbf{x} )_i$\,. Using this fact, we rewrite the left hand side as
\begin{flalign}
\mathbb{E}\left[\left\|\mathbf{\Ub}^\ts \mathbf{\Sb}^{\top} \mathbf{\Sb}\xb-\mathbf{U}^\ts\xb\right\|_2^2\right]
=&~\mathbb{E}\left[\sum_{i=1}^d(\mathbf{\Ub}^\ts \mathbf{\Sb}^{\top} \mathbf{\Sb}\xb-\mathbf{U}^\ts\xb)_i^2\right]
=~\sum_{i=1}^d \EE\left[(\mathbf{\Ub}^\ts \mathbf{\Sb}^{\top} \mathbf{\Sb}\xb-\mathbf{U}^\ts\xb)_i\right]^2\nonumber\\
=&~\sum_{i=1}^d \EE\left[(\mathbf{\Ub}^\ts \mathbf{\Sb}^{\top} \mathbf{\Sb}\xb)_i-(\mathbf{U}^\ts\xb)_i\right]^2
=~\sum_{i=1}^d \EE\left[(\mathbf{\Ub}^\ts \mathbf{\Sb}^{\top} \mathbf{\Sb}\xb)_i-\EE  ( \mathbf{U}^\ts \mathbf{S}^\ts \mathbf{S} \mathbf{x} )_i\right]^2\nonumber\\
=&~\sum_{i=1}^d \var \left( ( \mathbf{U}^\ts \mathbf{S}^\ts \mathbf{S} \mathbf{x} )_i \right)\label{eqn:varint}
\end{flalign}

Now, combining \emph{Part (b)} of Lemma~\ref{lem:exp_var} and eqn.~\eqref{eqn:varint}, we further have
\begin{flalign}
\mathbb{E}\left[\left\|\mathbf{\Ub}^\ts \mathbf{\Sb}^{\top} \mathbf{\Sb}\xb-\mathbf{U}^\ts\xb\right\|_2^2\right]
=~&\sum_{i=1}^d  \sum_{j=1}^n\left[\frac{1}{s}\frac{ (\mathbf{U}^\ts)_{ij}^2 x_{j}^2}{\pi_{j}}-\frac{1}{s}(\Ub^\ts\xb)_{i}^2\right]
=~\sum_{j=1}^n\left[\frac{1}{s}\frac{ x_{j}^2\sum_{i=1}^d(\mathbf{U}^\ts)_{ij}^2 }{\pi_{j}}-\frac{1}{s}\sum_{i=1}^d(\Ub^\ts\xb)_{i}^2\right]\nonumber\\
=~&\sum_{j=1}^n\left[\frac{1}{s}\frac{ x_{j}^2 \|(\mathbf{U}^\ts)_{*j}\|_2^2 }{\pi_{j}}-\frac{1}{s}\|\Ub^\ts\xb\|_2^2\right]
\le~\sum_{j=1}^n\frac{\|\mathbf{U}_{j*}\|_2^2\cdot x_{j}^2 }{s\,\pi_{j}}\nonumber\,,
\end{flalign}
where the last inequality follows from the fact that $\frac{1}{s}\|\Ub^\ts\xb\|_2^2\ge 0$. This concludes the proof.

\hfill\(\square\)

\vspace{2mm}


\section{Additional Experiments}

\begin{figure*}[t]
\centering
\subfigure{%
  \includegraphics[width=0.32\textwidth]{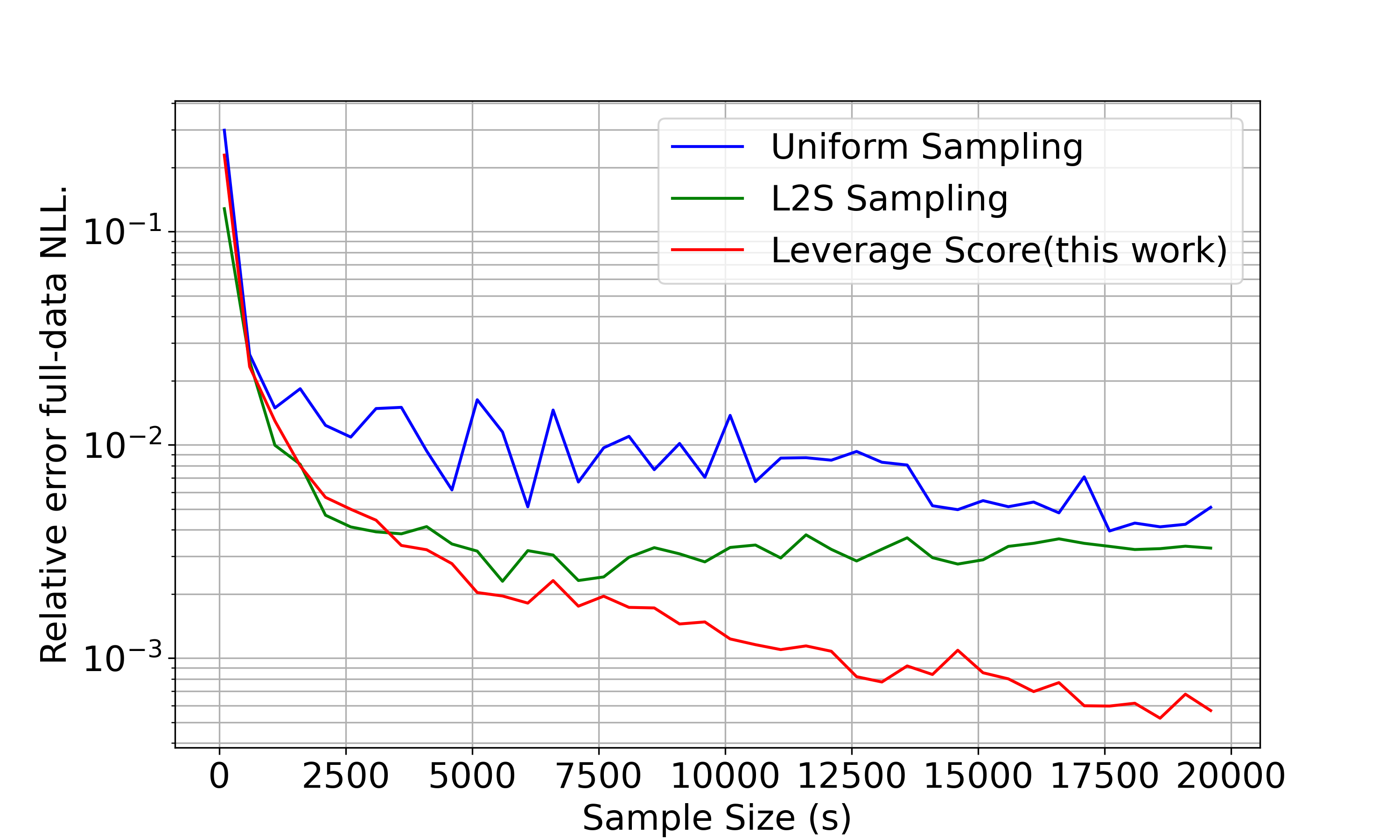}
  \label{fig:subfig19}
}
\hfill
\subfigure{%
  \includegraphics[width=0.32\textwidth]{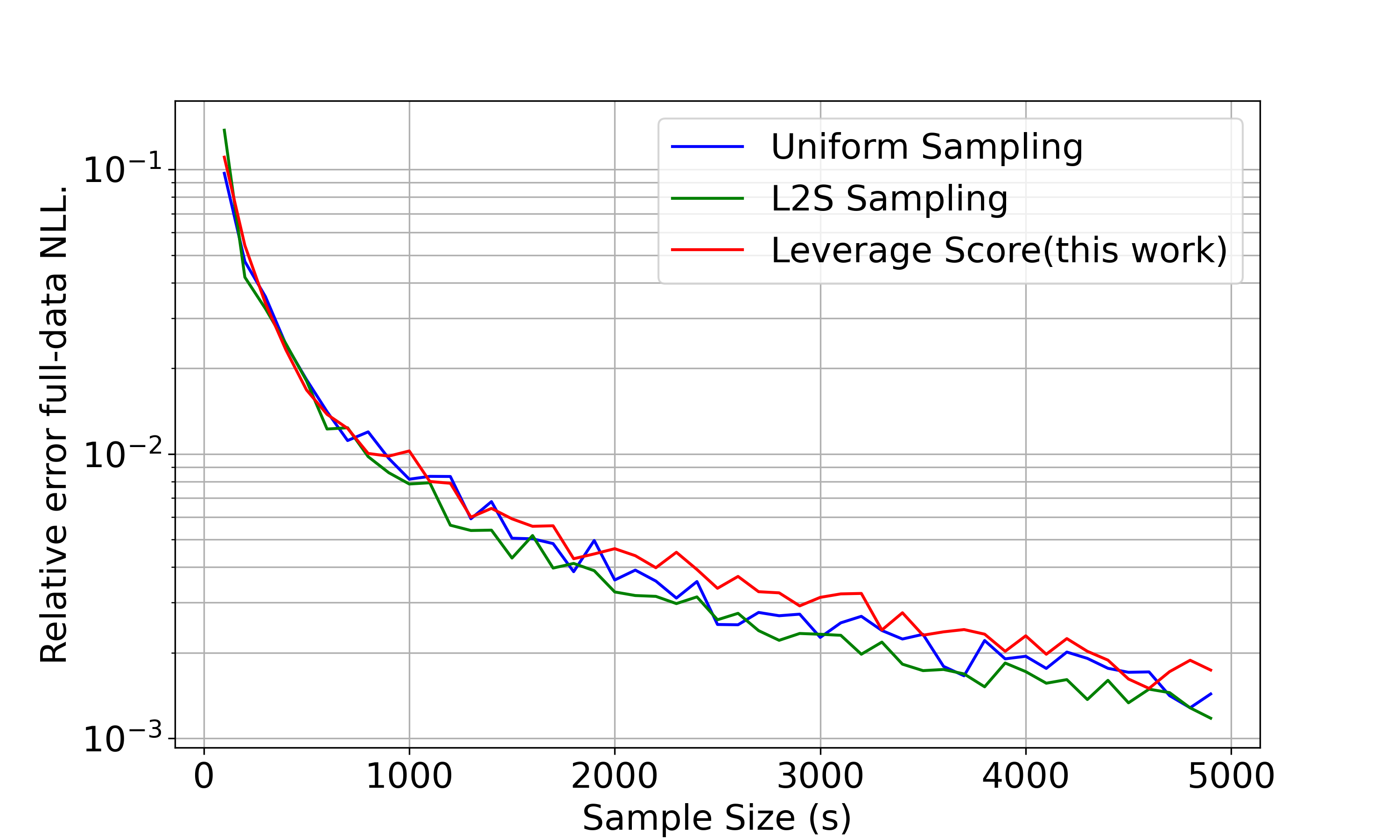}
  \label{fig:subfig20}
}
\hfill
\subfigure{%
  \includegraphics[width=0.32\textwidth]{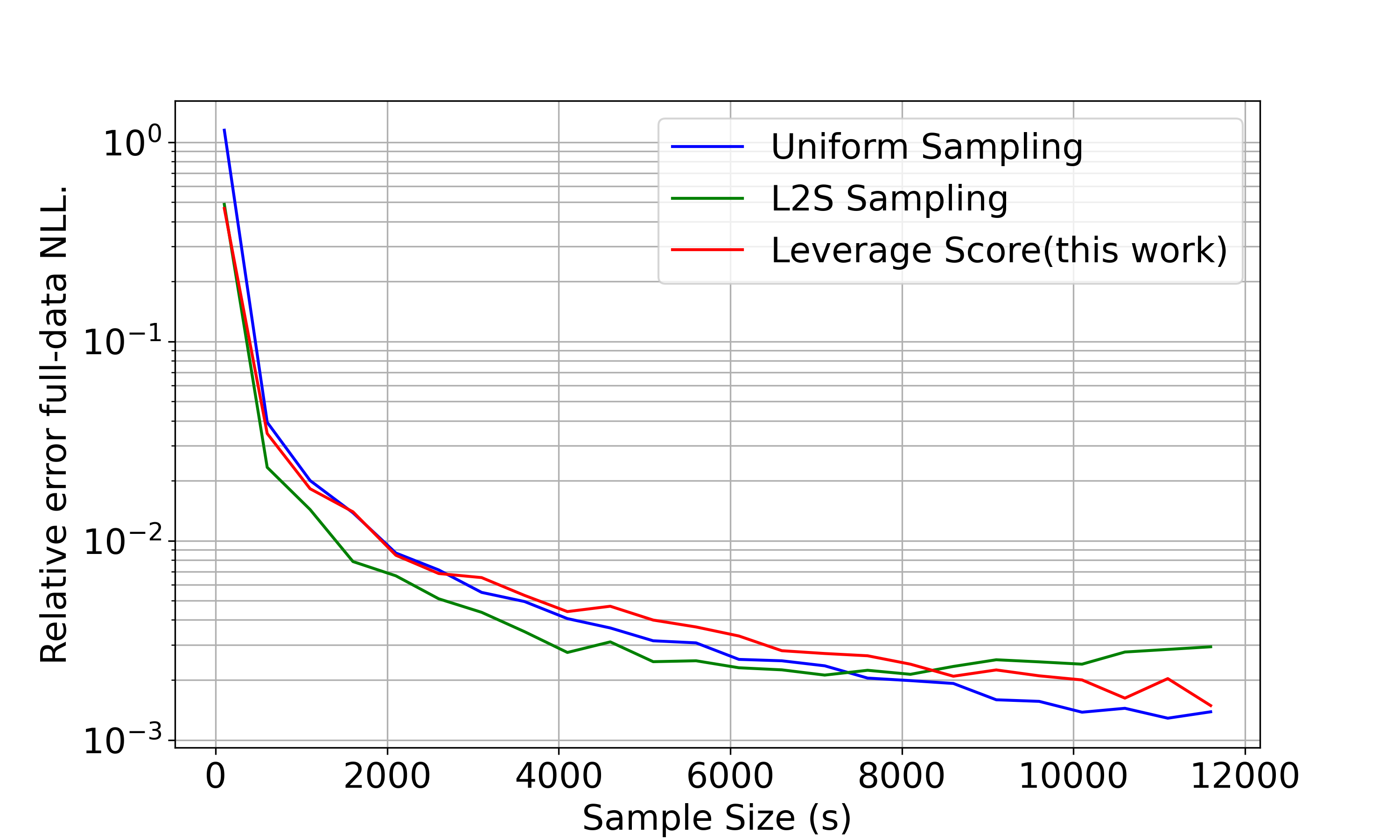}
  \label{fig:subfig21}
}

\vspace{-2mm}
\addtocounter{subfigure}{-3}
\subfigure[{\small cardio}]{%
  \includegraphics[width=0.32\textwidth]{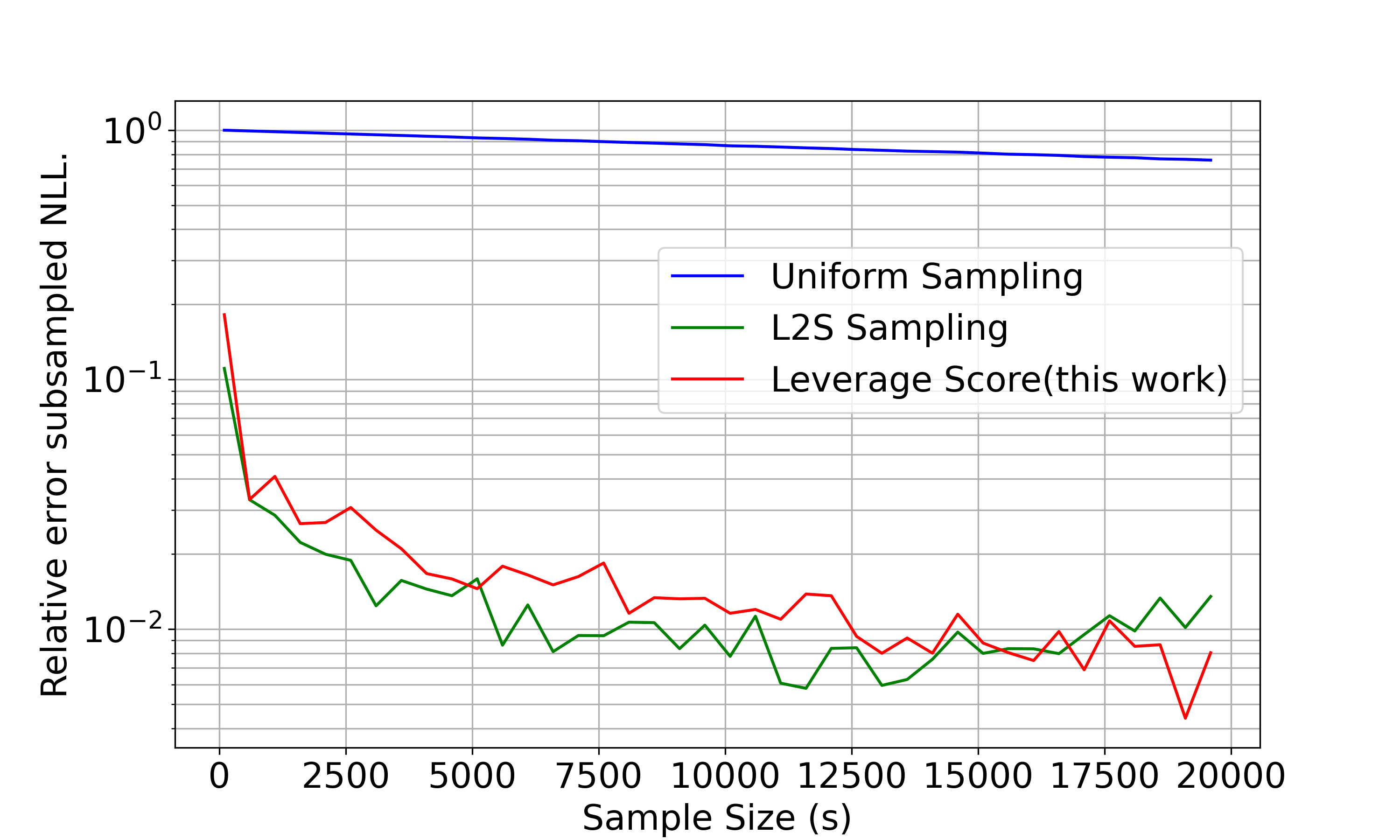}
  \label{fig:subfig16}
}
\hfill
\subfigure[{\small churn}]{%
  \includegraphics[width=0.32\textwidth]{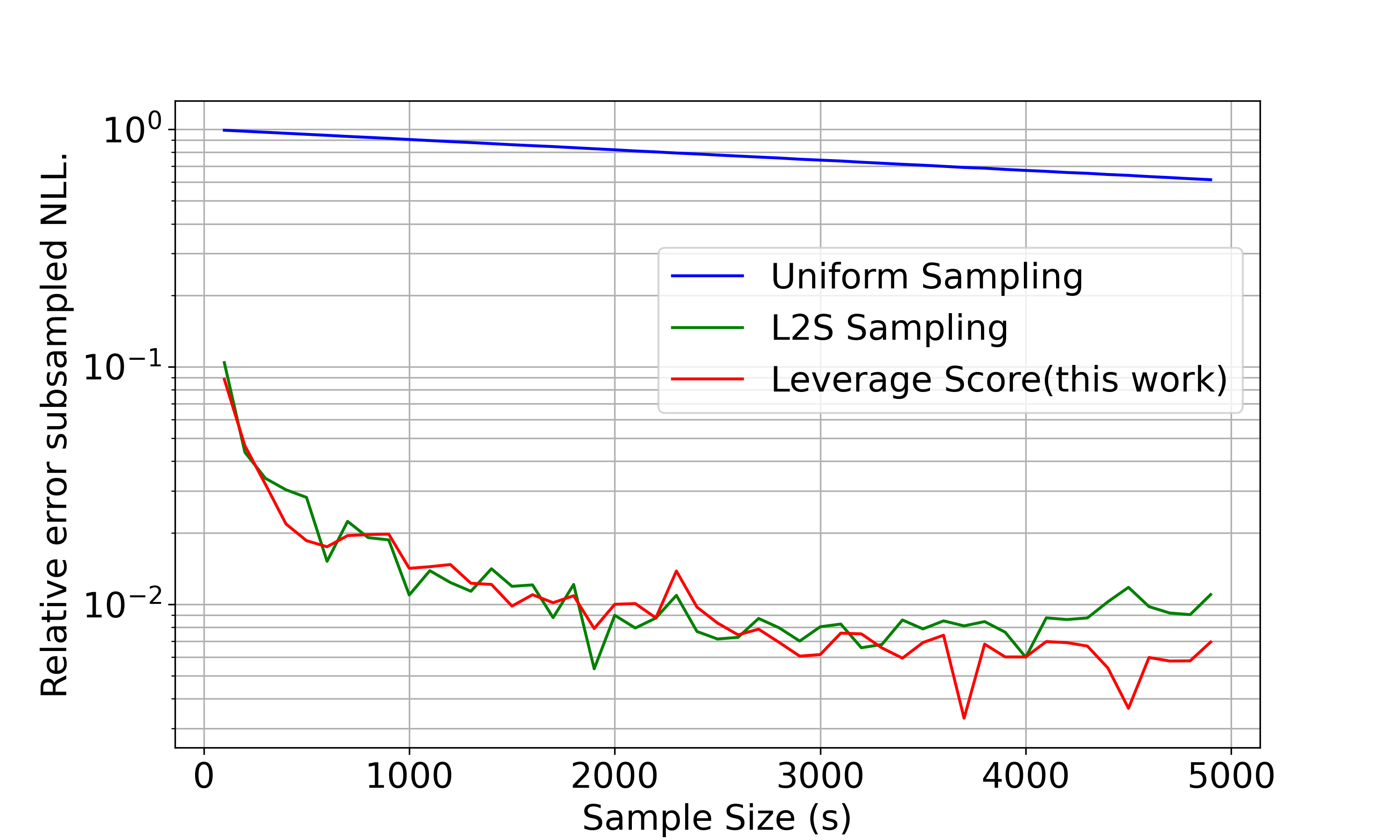}
  \label{fig:subfig17}
}
\hfill
\subfigure[{\small default}]{%
  \includegraphics[width=0.31\textwidth]{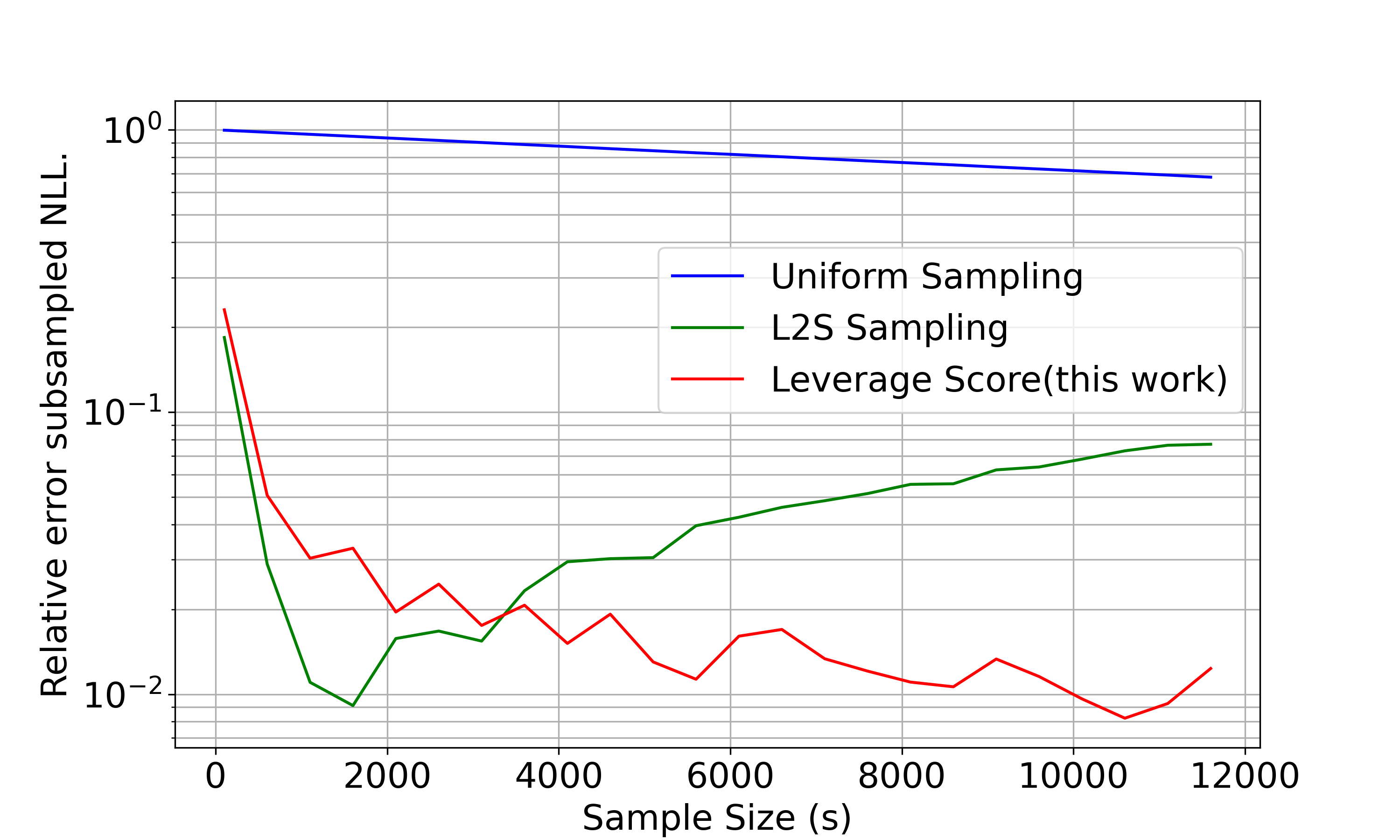}
  \label{fig:subfig18}
}
\caption{Relative error full-data negative log-likelihood (top row) and subsampled negative log-likelihood (bottom row) for all three datasets. Errors are in log-scale.
}
\label{fig:mainfig3}
\end{figure*}

We also compared our method in terms of (\emph{i}) $\nicefrac{|\ell(\hat{\betab})-\ell(\betab^*)|}{-\ell(\betab^*)}$, the relative error of the full-data negative log likelihood (Figure~\ref{fig:mainfig3} top row), which is a more common metric in other recent works and (\emph{ii}) $\nicefrac{|\Bar{\ell}(\hat{\betab})-\ell\left(\betab^*\right)|}{-\ell(\betab^*)}$, the relative error of the subsampled negative log-likelihood with respect to the full data negative log likelihood (see Figure~\ref{fig:mainfig3} bottom row). For $(ii)$, note that the first term on the numerator is $\Bar{\ell}(\hat{\betab})$ \ie, eqn.~\eqref{eq:subLL} evaluated at the output of Algorithm~\ref{algo:main} when the sampling matrix $\Sb$ is constructed using uniform sampling, leverage score sampling, and L2S method of \cite{munteanu2018coresets}.

In the top row of Figure~\ref{fig:mainfig3}, we present the relative error of the full data negative log-likelihoods in Figure~\ref{fig:mainfig3}. The trends closely resemble those shown in the first row of Figure~\ref{fig:mainfig2} (in terms of relative error of estimated probabilities). While the error due to leverage score sampling for the \emph{cardiovascular disease} dataset outperforms the other two sampling strategies, in the other two datasets (namely, \emph{Bank customer churn prediction} and \emph{Default of credit card clients}), the performance of all three sampling schemes is quite close (with L2S slightly better), and the errors decrease with the increase in $s$.
In the bottom row of Figure~\ref{fig:mainfig3}, we evaluate the relative error of the subsampled negative log-likelihood. Across all three datasets, both leverage score sampling and L2S exhibit significantly lower errors compared to uniform sampling. In the last column, our approach gets much better performance as $s$ increases. Similarly, for the other two datasets, our method outperforms L2S as $s$ gets larger.

In Figure~\ref{fig:mainfig4}, we plot the standard deviations from the $20$ runs for each of the experiments conducted.

\begin{figure*}[t]
\centering

\subfigure{%
  \includegraphics[width=0.32\textwidth]{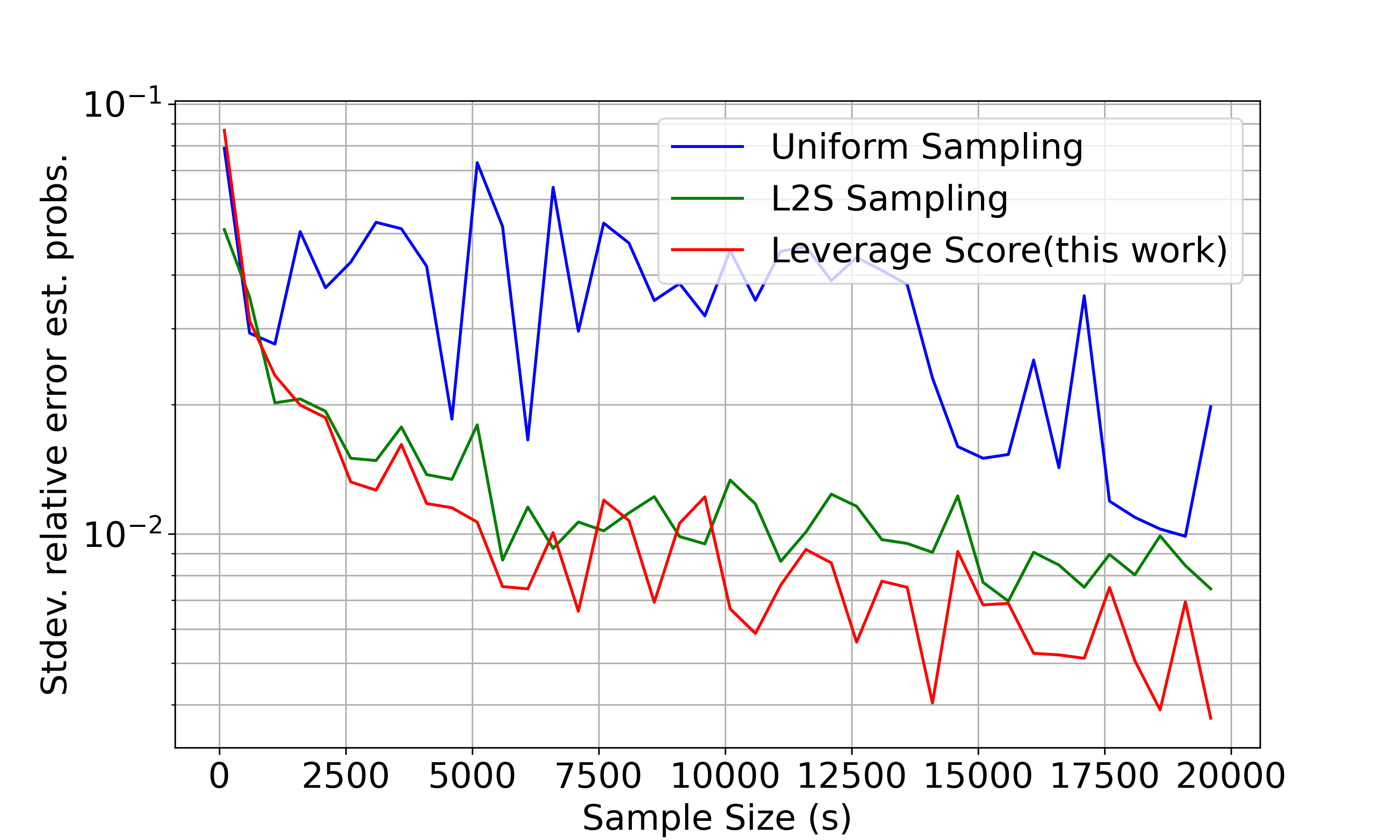}
  \label{fig:subfig22}
}
\hfill
\subfigure{%
  \includegraphics[width=0.32\textwidth]{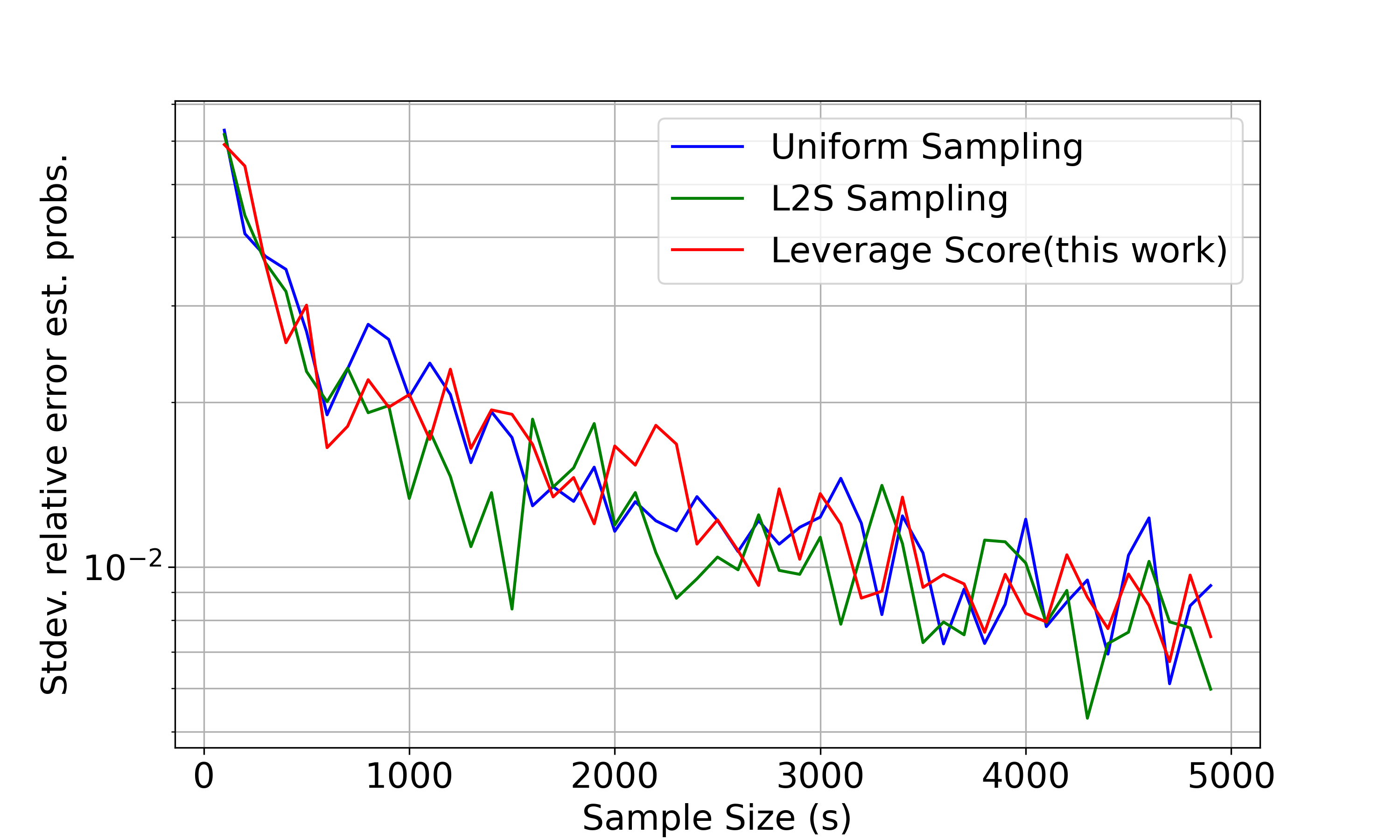}
  \label{fig:subfig23}
}
\hfill
\subfigure{
  \includegraphics[width=0.31\textwidth]{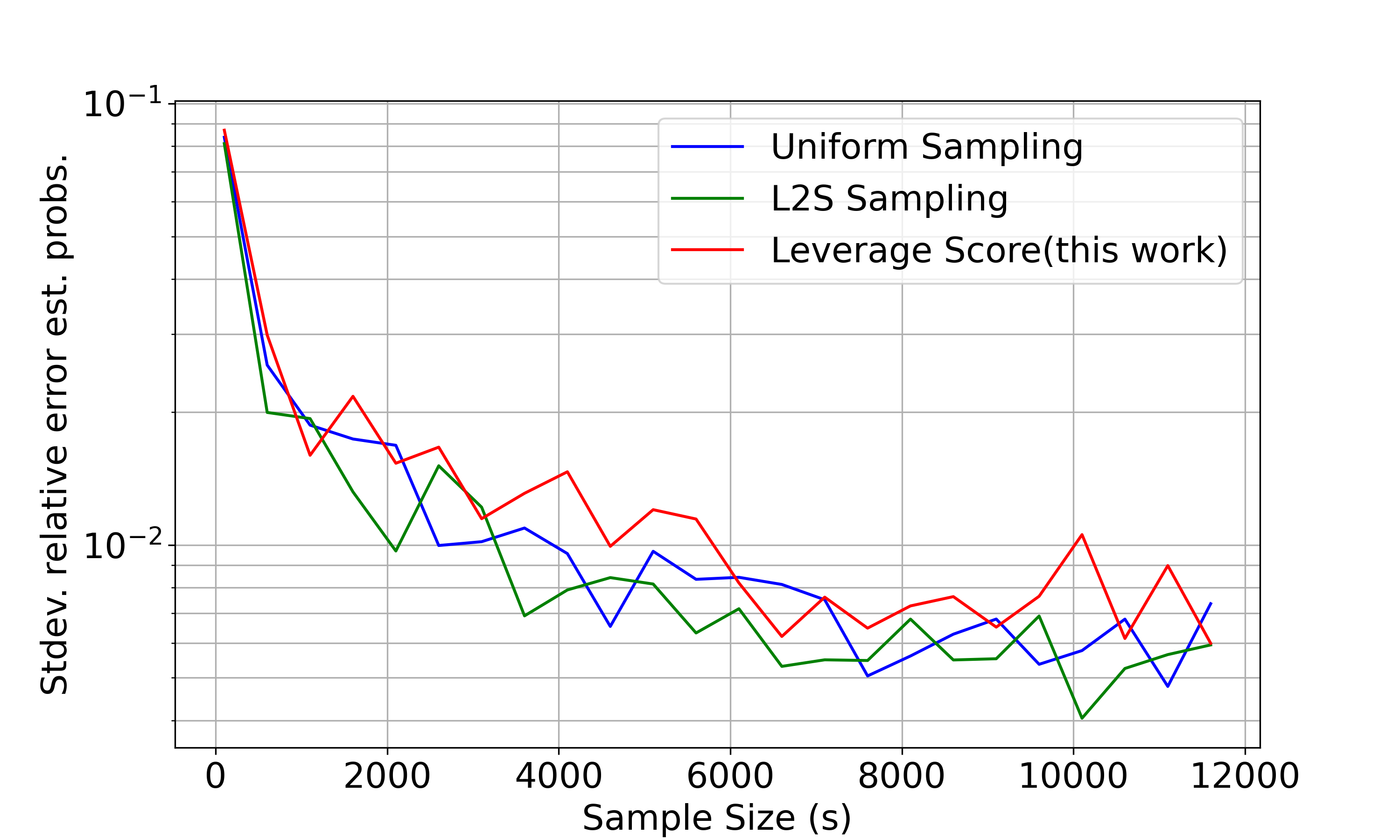}
  \label{fig:subfig24}
}

\vspace{-2mm}
\subfigure{%
  \includegraphics[width=0.32\textwidth]{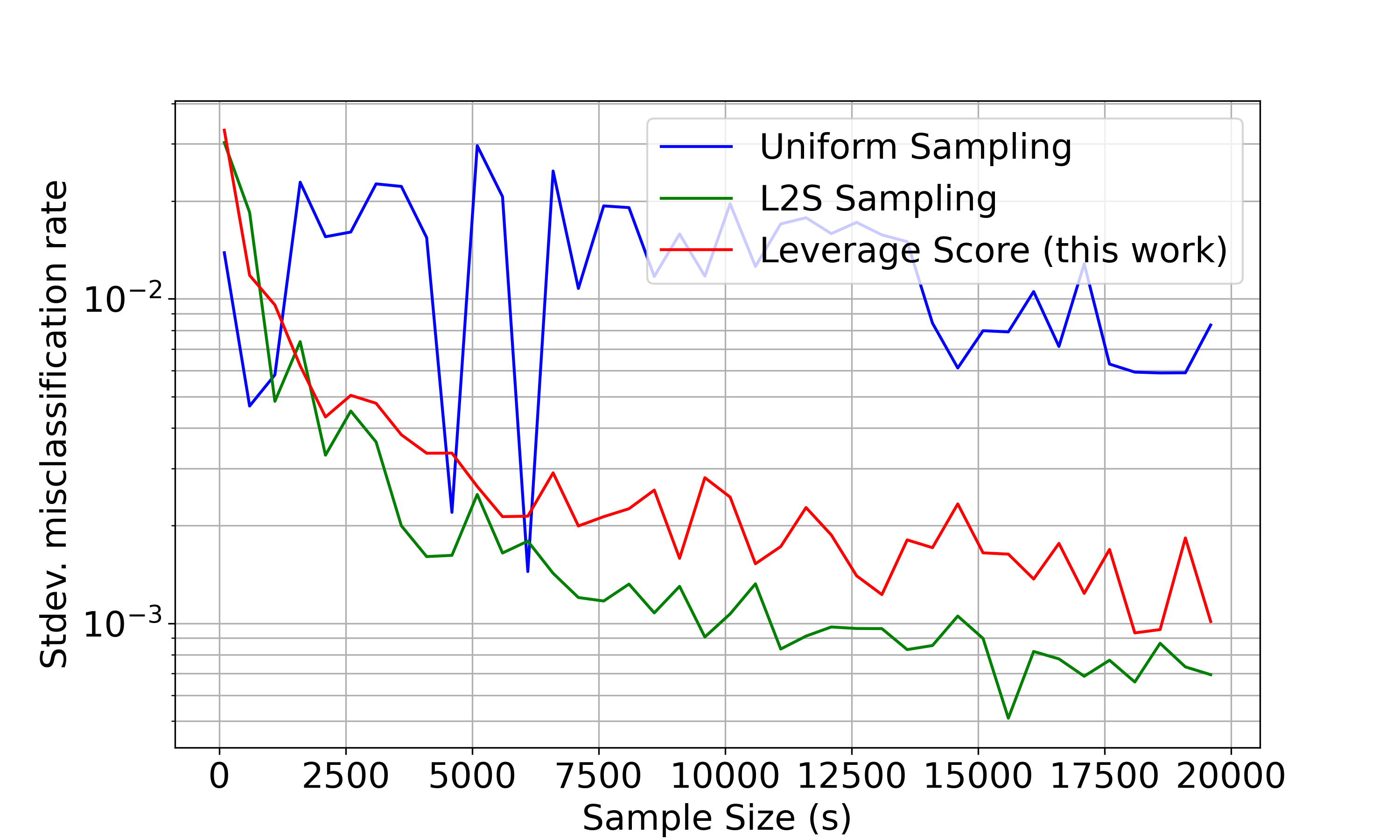}
  \label{fig:subfig25}
}
\hfill
\subfigure{%
  \includegraphics[width=0.32\textwidth]{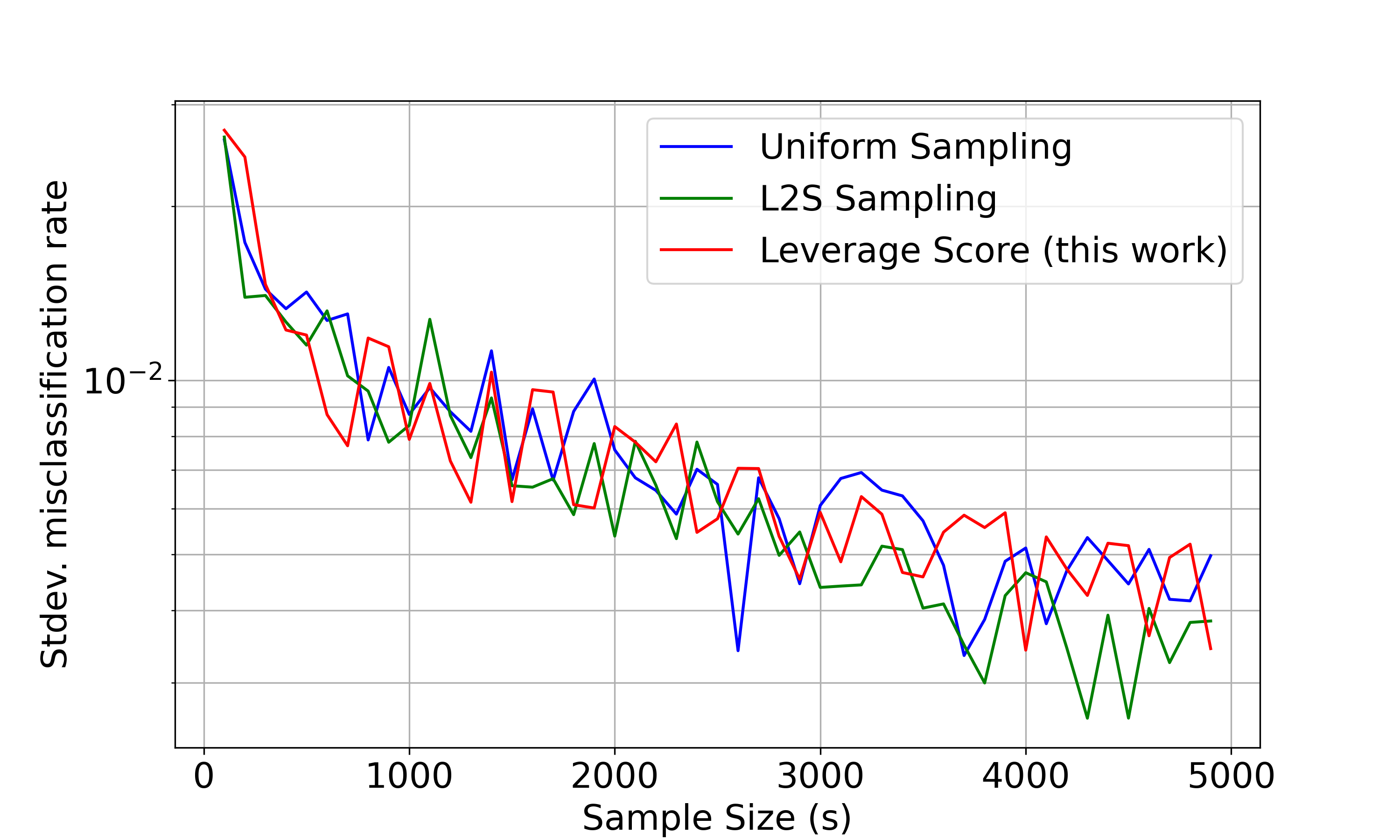}
  \label{fig:subfig26}
}
\hfill
\subfigure{%
  \includegraphics[width=0.32\textwidth]{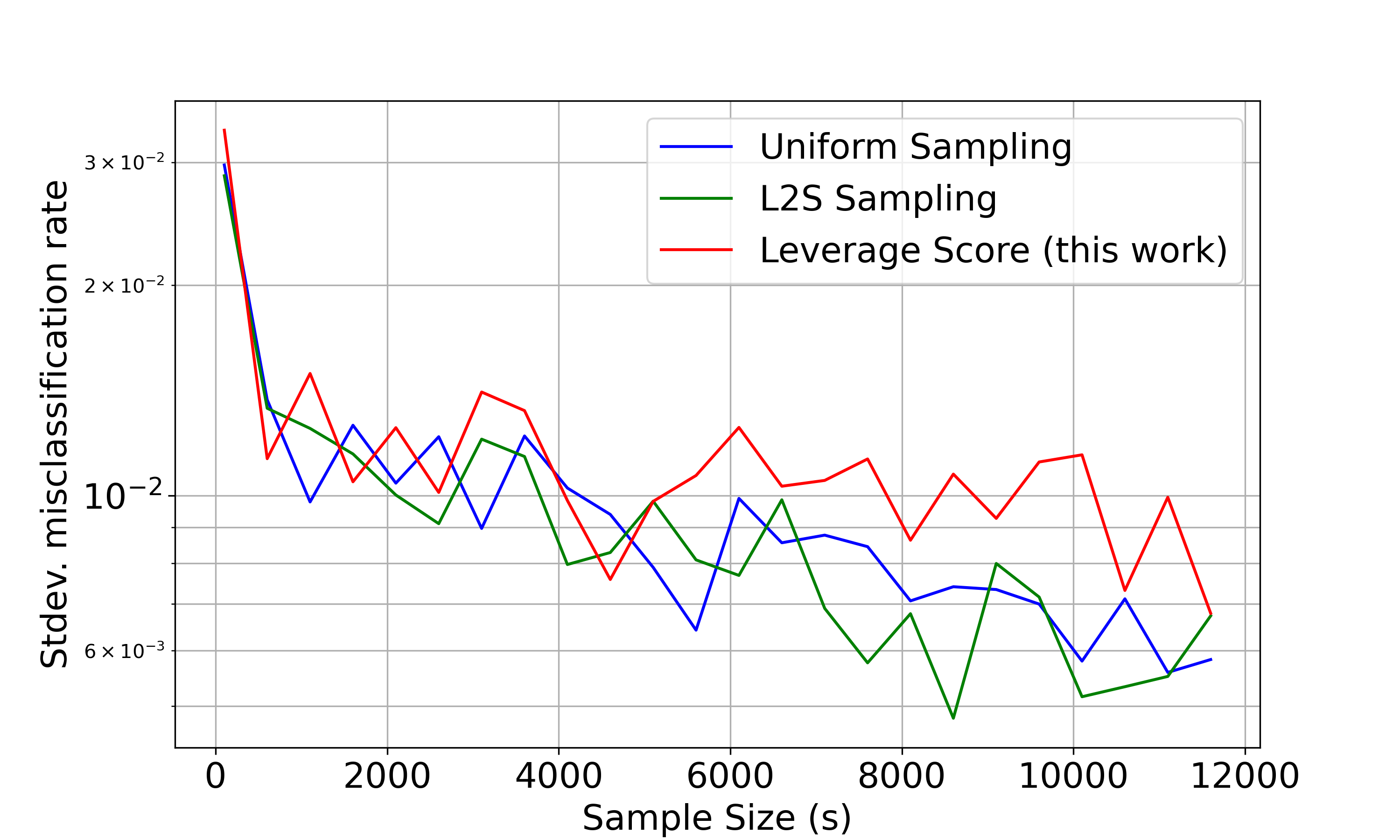}
  \label{fig:subfig27}
}

\vspace{-2mm}

\subfigure{%
  \includegraphics[width=0.32\textwidth]{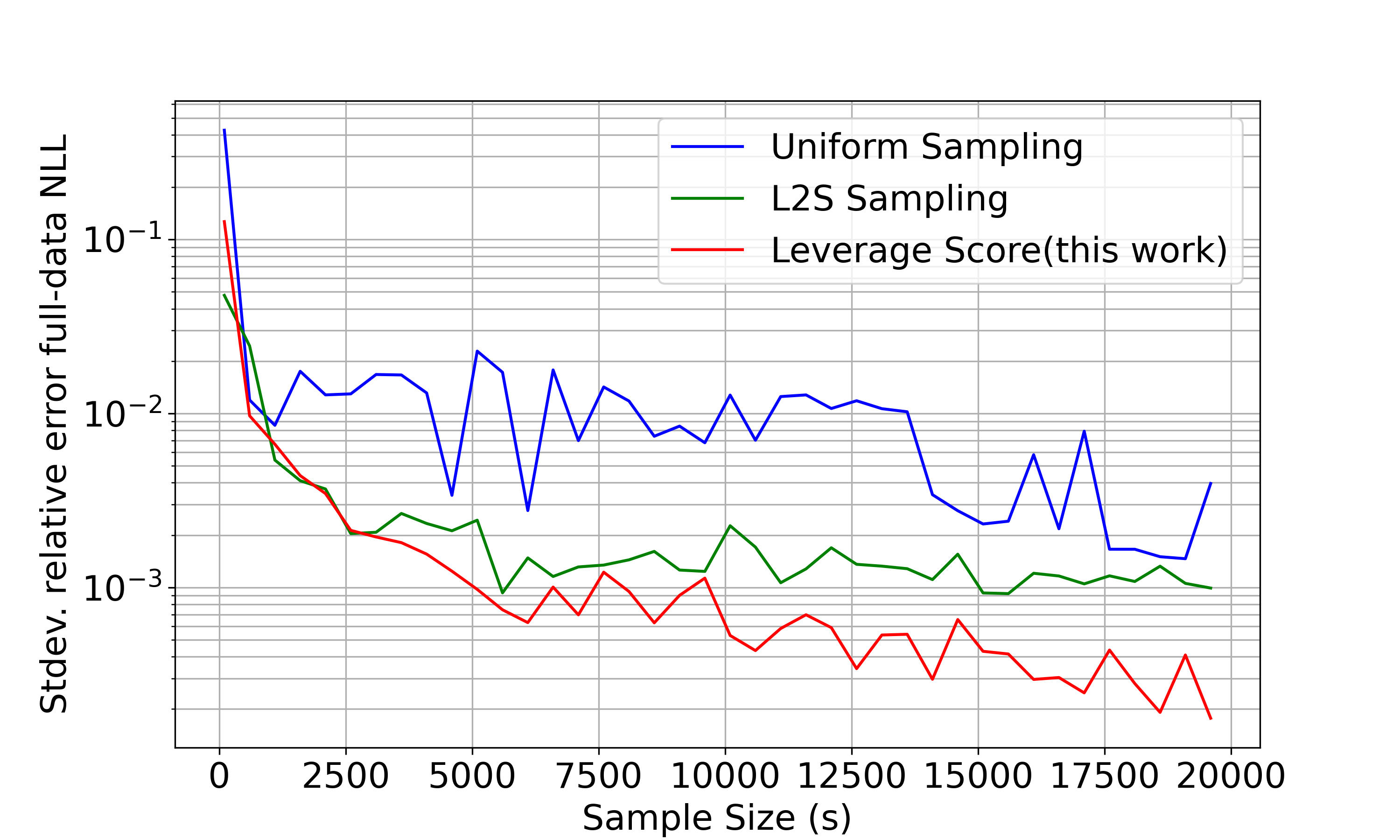}
  \label{fig:subfig31}
}
\hfill
\subfigure{%
  \includegraphics[width=0.32\textwidth]{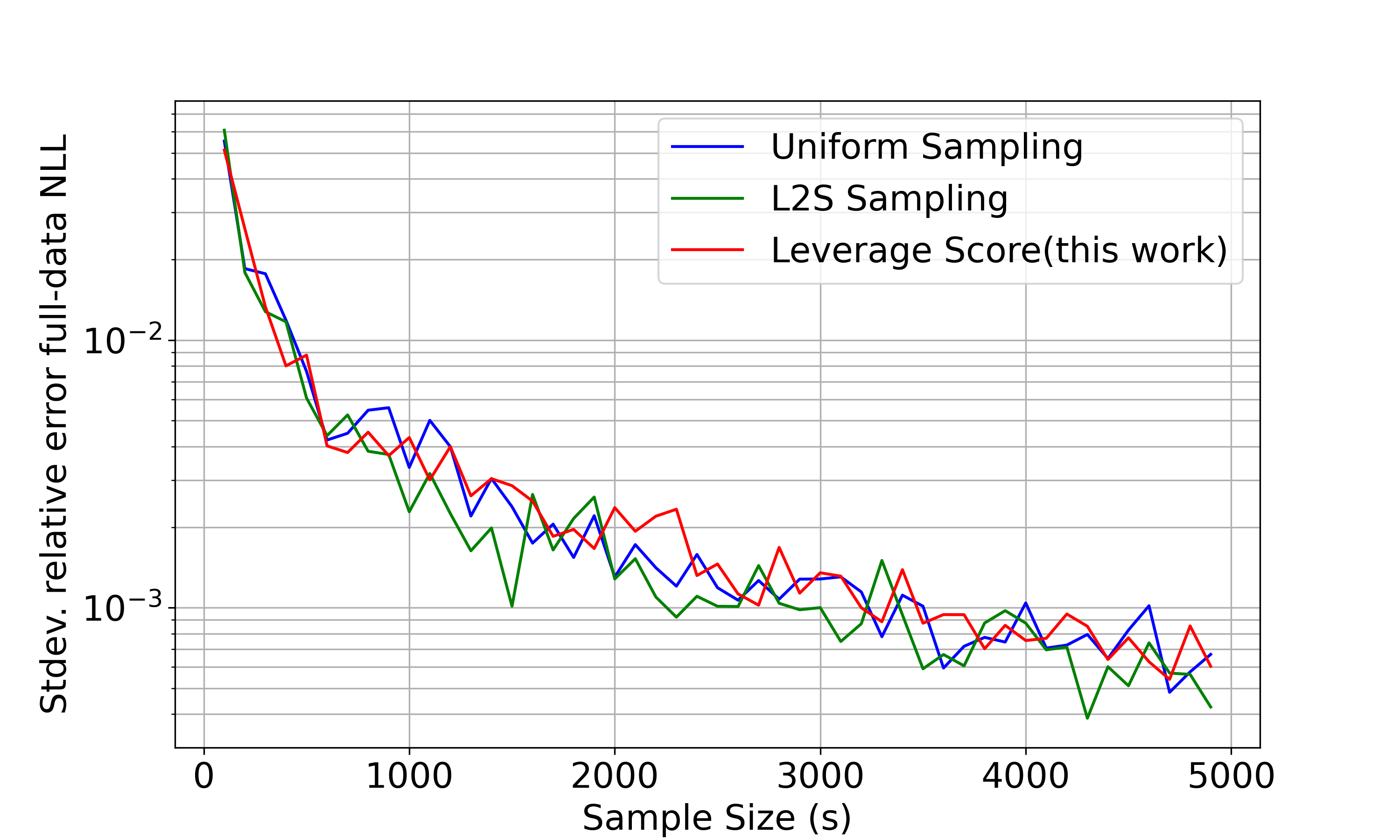}
  \label{fig:subfig32}
}
\hfill
\subfigure{%
  \includegraphics[width=0.32\textwidth]{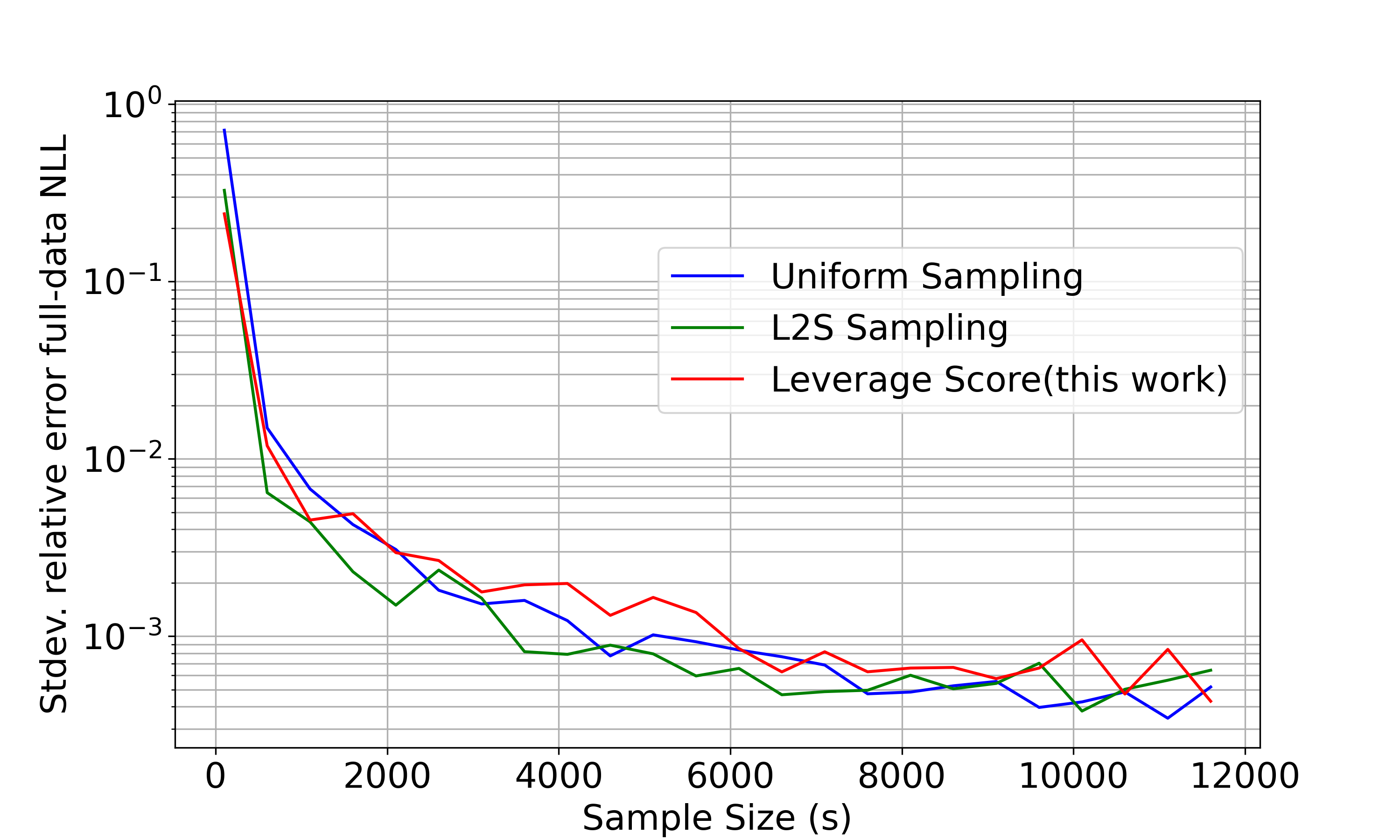}
  \label{fig:subfig33}
}

\addtocounter{subfigure}{-9}

\subfigure[{\small cardio}]{%
  \includegraphics[width=0.32\textwidth]{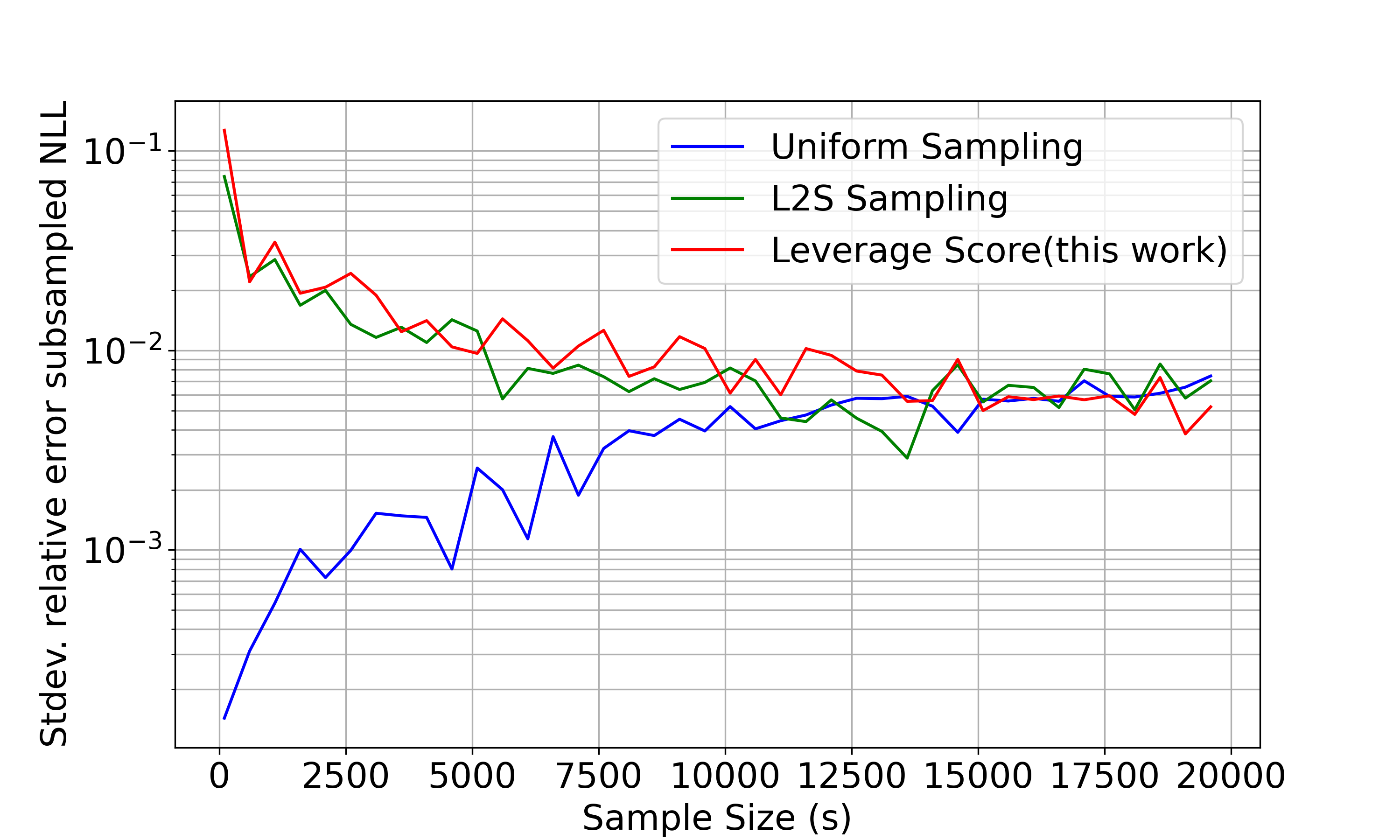}
  \label{fig:subfig28}
}
\hfill
\subfigure[{\small churn}]{%
  \includegraphics[width=0.32\textwidth]{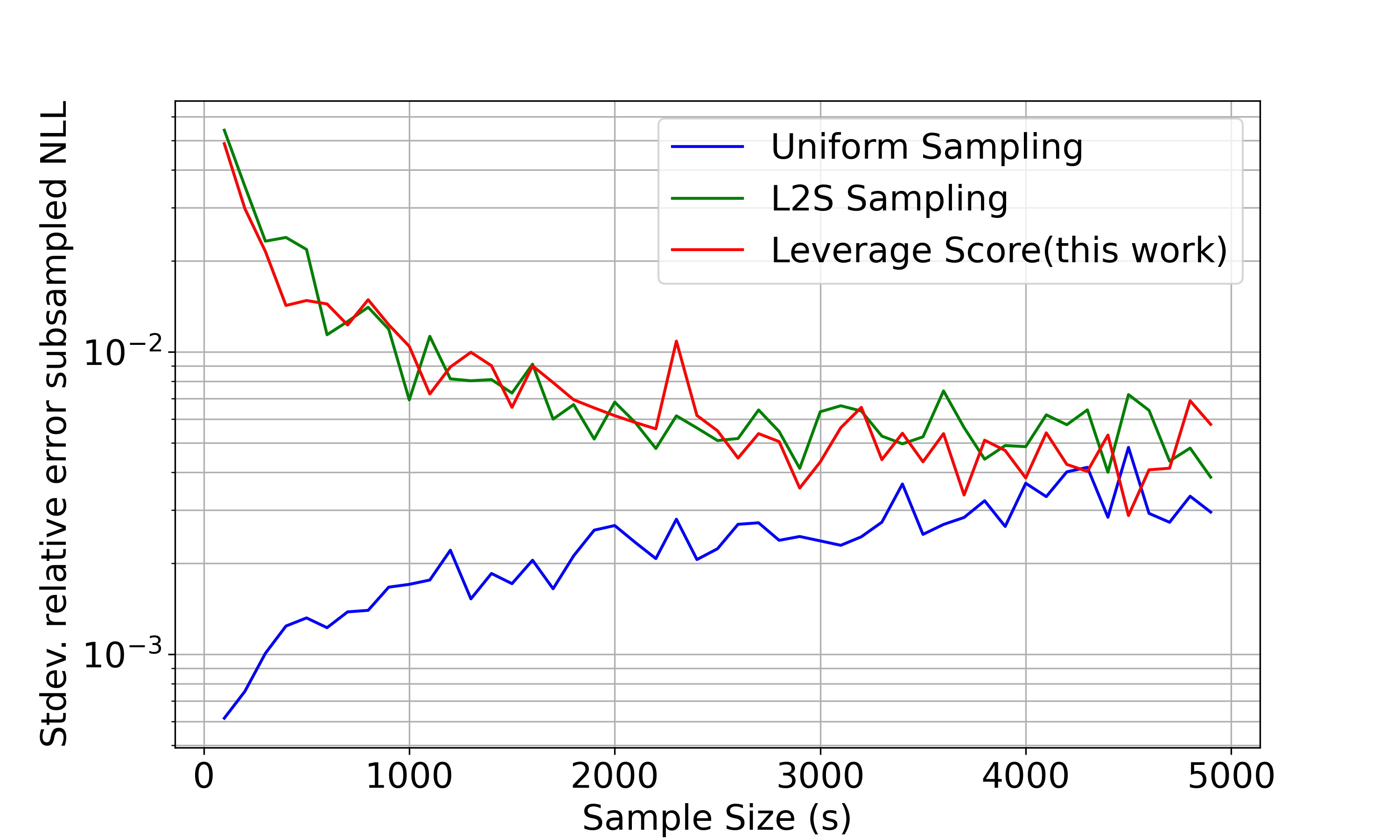}
  \label{fig:subfig29}
}
\hfill
\subfigure[{\small default}]{%
  \includegraphics[width=0.32\textwidth]{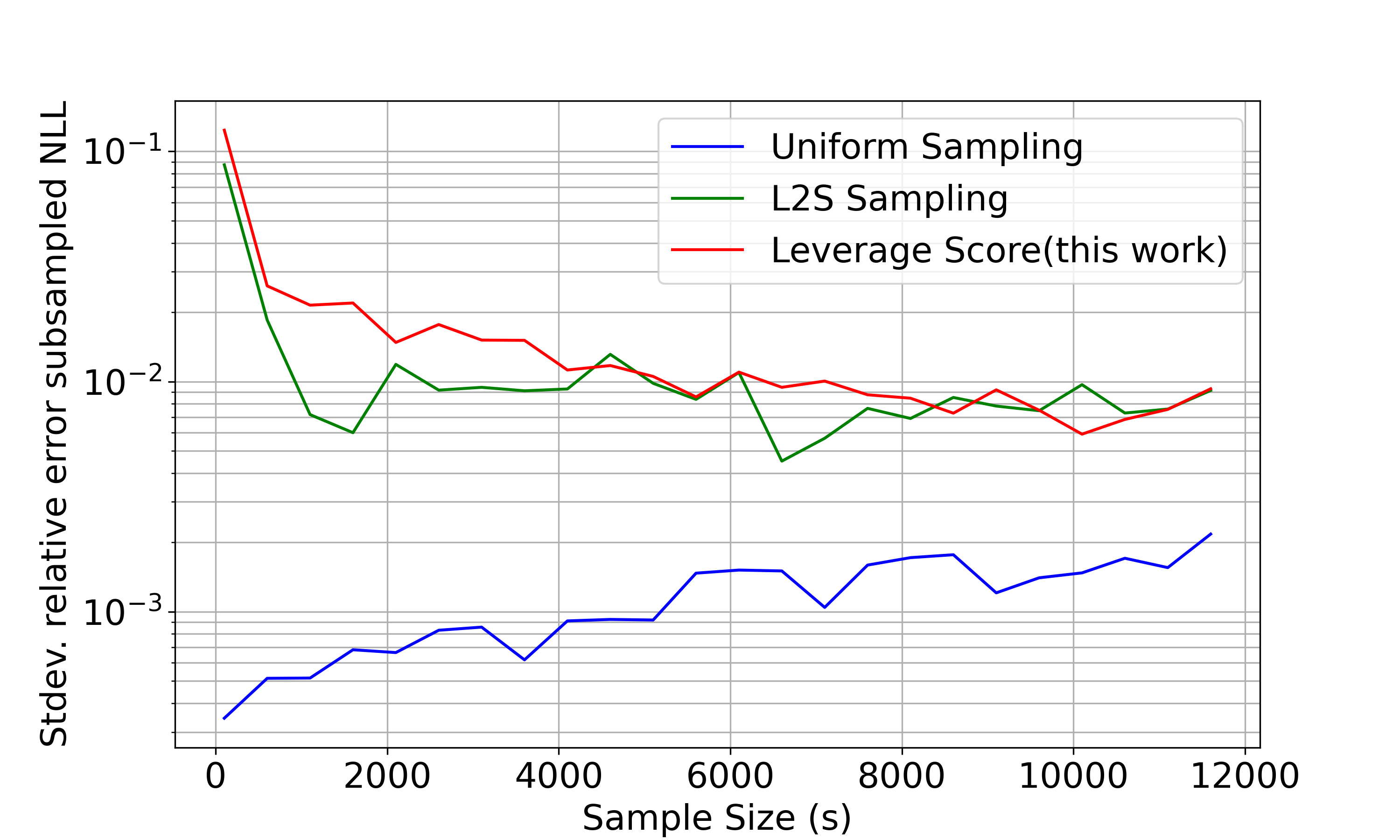}
  \label{fig:subfig30}
}

\caption{
Standard deviations for all the metrics in Figures~\ref{fig:mainfig2} and \ref{fig:mainfig3}.
All the errors are in log-scale.
}
\label{fig:mainfig4}
\end{figure*}

\end{document}